\newtheorem{defi}{Definition}
\newtheorem{asm}{Assumption}
\newtheorem{lem}{Lemma}
\newtheorem{thm}{Theorem}
\DeclareMathOperator*{\argmax}{arg\,max}
\def\dd#1{\mathrm{d}#1}
\newcommand{\algo}{AIRBO} 
\newcommand{\model}{MMDGP} 
\newcommand{\Nystrom}{Nystr\"{o}m} 
\newcommand{\nystrom}{Nystr\"{o}m}
\title{Efficient Robust Bayesian Optimization \\for Arbitrary Uncertain Inputs}
\author{%
  Lin Yang\\
  Huawei Noah’s Ark Lab \\
  China \\
  \texttt{yanglin33@huawei.com} \\
   \And
   Junlong Lyu \\
   Huawei Noah’s Ark Lab \\
   Hong Kong SAR, China \\
   \texttt{lyujunlong@huawei.com} \\
   \AND
   Wenlong Lyu \\
   Huawei Noah’s Ark Lab \\
   China \\ 
   \texttt{lvwenlong2@huawei.com} \\
   \And
   Zhitang Chen \\
   Huawei Noah’s Ark Lab \\
   Hong Kong SAR, China \\
   \texttt{chenzhitang2@huawei.com} \\
}
\begin{document}

\maketitle

\begin{abstract}
Bayesian Optimization (BO) is a sample-efficient optimization algorithm widely employed across various applications. In some challenging BO tasks, input uncertainty arises due to the inevitable randomness in the optimization process, such as machining errors, execution noise, or contextual variability. This uncertainty deviates the input from the intended value before evaluation, resulting in significant performance fluctuations in final result. In this paper, we introduce a novel robust Bayesian Optimization algorithm, \algo{}, which can effectively identify a robust optimum that performs consistently well under arbitrary input uncertainty. Our method directly models the uncertain inputs of arbitrary distributions by empowering the Gaussian Process with the Maximum Mean Discrepancy (MMD) and further accelerates the posterior inference via \nystrom{} approximation. Rigorous theoretical regret bound is established under MMD estimation error and  extensive experiments on synthetic functions and real problems demonstrate that our approach can handle various input uncertainties and achieve a state-of-the-art performance.
\end{abstract}

\section{Introduction}\label{sec_intro}
Bayesian Optimization (BO) is a powerful sequential decision-making algorithm for high-cost black-box optimization. Owing to its remarkable sample efficiency and capacity to balance exploration and exploitation, BO has been successfully applied in diverse domains, including neural architecture search~\cite{white2021bananas}, hyper-parameter tuning~\cite{bergstra2011algorithms, snoek2012practical, cowen2022hebo}, and robotic control~\cite{martinez2007active, berkenkamp2021bayesian}, among others. 
Nevertheless, in some real-world problems, the stochastic nature of the optimization process, such as machining error during manufacturing, execution noise of control, or variability in contextual factor, inevitably introduces input randomness, rendering the design parameter $x$ to deviate to $x^\prime$ before evaluation. This deviation produces a fluctuation of function value $y$ and eventually leads to a performance instability of the outcome. In general, the input randomness is determined by the application scenario and can be of arbitrary distribution, even quite complex ones. Moreover, in some cases, we cannot observe the exact deviated input $x^\prime$ but a rough estimation for the input uncertainty. 
This is quite common for robotics and process controls. For example, consider a robot control task shown in Figure~\ref{fig_example_case}, a drone is sent to a target location $x$ to perform a measurement task. However, due to the execution noise caused by the fuzzy control or a sudden wind, the drone ends up at location $x^\prime \sim P(x)$ and gets a noisy measurement $y=f(x^\prime) + \zeta $. Instead of observing the exact value of $x^\prime$, we only have a coarse estimation of the input uncertainty $P(x)$. The goal is to identify a robust location that gives the maximal expected measurement under the process randomness.

\begin{figure}
     \centering
     \begin{subfigure}[t]{0.3\textwidth}
         \centering
         \includegraphics[width=\textwidth]{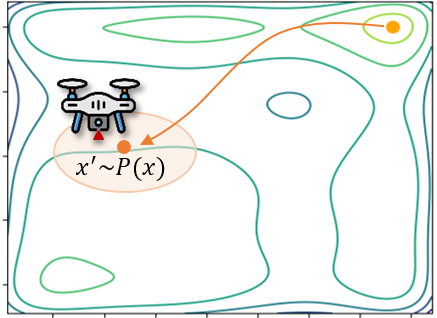}
         \caption{An example case: drone measurement with execution noise.}
         \label{fig_example_case}
     \end{subfigure}
     \hfill
     \begin{subfigure}[t]{0.68\textwidth}
         \centering
         \includegraphics[width=\textwidth]{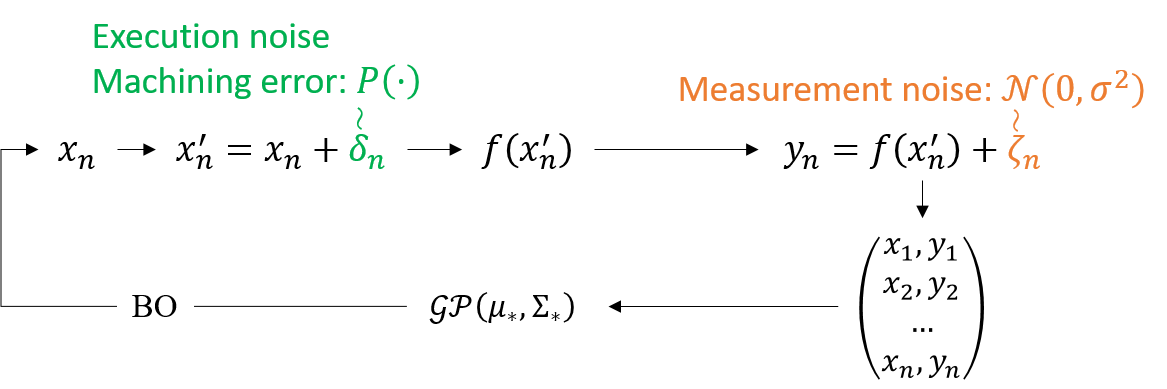}
         \caption{Problem formulation.}
         \label{fig_problem}
     \end{subfigure}
\caption{Robust Bayesian optimization problem.}
\label{fig_robust_BO_problem}
\vspace{-2em}
\end{figure}

To find a robust optimum, it is crucial to account for input uncertainty during the optimization process. Existing works~\cite{nogueiraUnscentedBayesianOptimization2016, belandBayesianOptimizationUncertainty2017, bogunovicAdversariallyRobustOptimization2018, christianson2023robust} along this direction assume that the exact input value, \textit{i.e.}, $x^\prime$ in Figure~\ref{fig_problem}, is observable and construct a surrogate model using these exact inputs. Different techniques are then employed to identify the robust optimum:  \citeauthor{nogueiraUnscentedBayesianOptimization2016} utilize the unscented transform to propagate input uncertainty to the acquisition function~\cite{nogueiraUnscentedBayesianOptimization2016}, while \citeauthor{belandBayesianOptimizationUncertainty2017} integrate over the exact GP model to obtain the posterior with input uncertainty~\cite{belandBayesianOptimizationUncertainty2017}. Meanwhile, \cite{bogunovicAdversariallyRobustOptimization2018} designs a robust confidence-bounded acquisition and applies min-max optimization to identify the robust optimum. Similarly, \cite{christianson2023robust} constructs an adversarial surrogate with samples from the exact surrogate. These methods work quite well but are constrained by their dependence on observable input values, which may not always be practical.

An alternative approach involves directly modeling the uncertain inputs. A pioneering work by Moreno \textit{et al.}~\cite{moreno2003kullbackleiblera} assumes Gaussian input distribution and employs a symmetric Kullback-Leibler divergence (SKL) to measure the distance of input variables. Dallaire \textit{et al.}~\cite{dallaireApproximateInferenceGaussian2011a} implement a Gaussian process model with an expected kernel and derive a closed-form solution by restricting the kernel to linear, quadratic, or squared exponential kernels and assuming Gaussian inputs. Nonetheless, the applicability of these methods is limited due to their restrictive Gaussian input distribution assumption and kernel choice. To surmount these limitations, Oliveira \textit{et al.} propose a robust Gaussian process model that incorporates input distribution by computing an integral kernel. Although this kernel can be applied to various distributions and offers a rigorous regret bound, its posterior inference requires a large sampling and can be time-consuming.

In this work, we propose an Arbitrary Input uncertainty Robust Bayesian Optimization algorithm (\algo{}). This algorithm can directly model the uncertain input of arbitrary distribution and propagate the input uncertainty into the surrogate posterior, which can then be used to guide the search for robust optimum. To achieve this, we employ Gaussian Process (GP) as the surrogate and empower its kernel design with the Maximum Mean Discrepancy (MMD), which allows us to comprehensively compare the uncertain inputs in Reproducing Kernel Hilbert Space (RKHS) and accurately quantify the target function under various input uncertainties(Sec.~\ref{sec_surrogate}). 
Moreover, to stabilize the MMD estimation and accelerate the posterior inference, we utilize \nystrom{} approximation to reduce the space complexity of MMD estimation from $O(m^2)$ to $O(mh)$, where $h \ll m$ (Sec. \ref{sec_nystrom}). This can substantially improve the parallelization of posterior inference and a rigorous theoretical regret bound is also established under the approximation error (Sec. \ref{sec_theoretical_analysis}). Comprehensive evaluations on synthetic functions and real problems in Sec.\ref{sec_evaluation} demonstrate that our algorithm can efficiently identify robust optimum under complex input uncertainty and achieve state-of-the-art performance.

\section{Problem Formulation}\label{sec_formulation}
In this section, we first formulize the robust optimization problem under input uncertainty then briefly review the intuition behind Bayesian Optimization and Gaussian Processes.

\subsection{Optimization with Input Uncertainty}
As illustrated in Figure~\ref{fig_problem}, we consider an optimization of expensive black-box function: $f(x)$, where $x$ is the \textit{design parameter} to be tuned. At each iteration $n$, we select a new query point $x_n$ according to the optimization heuristics. However, due to the stochastic nature of the process, such as machining error or execution noise, the query point is perturbed to $x_n^\prime$ before the function evaluation. Moreover, we cannot observe the exact value of $x_n^\prime$ and only have a vague probability estimation of its value: $P_{x_n}$. After the function evaluation, we get a noisy measurement $y = f(x_n^\prime) + \zeta_n $, where $\zeta_n$ is homogeneous measurement noise sampled from $\mathcal{N}(0, \sigma^2)$. The goal is to find an optimal design parameter $x^*$ that maximizes the expected function value under input uncertainty:
\begin{equation}
    x^* = \argmax_x{ \int_{x' \sim P_x } f(x') dx'} = \argmax_x \mathbb{E}_{P_x} [f]
\end{equation}
Depending on the specific problem and randomness source, the input distribution $P_x$ can be arbitrary in general and even become quite complex sometimes. Here we do not place any additional assumption on them, except assuming we can sample from these input distributions, which can be easily done by approximating it with Bayesian methods and learning a parametric probabilistic model~\cite{hastie2009elements}. Additionally, we assume the exact values of $x^\prime$ are inaccessible, which is quite common in some real-world applications, particularly in robotics and process control~\cite{oliveiraBayesianOptimisationUncertain2019a}.

\subsection{Bayesian Optimization}
In this paper, we focus on finding the robust optimum with BO. Each iteration of BO involves two key steps: I) fitting a surrogate model and II) maximizing an acquisition function.

\textbf{Gaussian Process Surrogate:} To build a sample-efficient surrogate, we choose Gaussian Process (GP) as the surrogate model in this paper. Following \cite{williams2006gaussian}, GP can be interpreted from a weight-space view: given a set of $n$ observations, $ \mathcal{D}_n = \{ (x_i, y_i) | i=1,...,n \} $. Denote all the inputs as $X \in  \mathbb{R}^{D\times n} $ and all the output vector as $ y\in \mathbb{R}^{n \times 1}$. We first consider a linear surrogate:
\begin{equation}
    f(x) = x^T w, \; y=f(x) + \zeta, \, \zeta \sim \mathcal{N}(0, \sigma^2), 
\end{equation}
\noindent where $w$ is the model parameters and $\zeta$ is the observation noise. This model's capacity is limited due to its linear form. To obtain a more powerful surrogate, we can extend it by projecting the input $x$ into a feature space $\phi(x)$. By taking a Bayesian treatment and placing a zero mean Gaussian prior on the weight vector: $w \sim \mathcal{N}(0, \Sigma_p)$, its predictive distribution can be derived as follows (see Section2.1 of~\cite{williams2006gaussian} for detailed derivation):
\begin{equation}\label{eq_pred}
    \begin{split}
      f_*|x_*, X, y \sim \mathcal{N} \big( & \phi^T(x_*) \Sigma_p \phi(X) (A + \sigma_n^2 I)^{-1} y, \\
      & \phi^T(x_*) \Sigma_p \phi(x_*) - \phi^T(x_*) \Sigma_p \phi(X) (A + \sigma_n^2I)^{-1} \phi^T(X) \Sigma_p \phi(x_*) \big),
    \end{split}
\end{equation}
\noindent where $A = \phi^T(X) \Sigma_p \phi(X) $ and $I$ is a identity matrix. Note the predictive distribution is also a Gaussian and the feature mappings are always in the form of inner product with respect to $\Sigma_p$. This implies we are comparing inputs in a feature space and enables us to apply kernel trick. Therefore, instead of exactly defining a feature mapping $\phi(\cdot)$, we can define a kernel: $k(x, x^\prime) = \phi(x)^T \Sigma_p \phi(x^\prime) = \psi(x) \cdot \psi(x^\prime)$. Substituting it into Eq.~\ref{eq_pred} gives the vanilla GP posterior:
\begin{equation}
    \begin{split}
        & f_* | X, y, X_* \sim \mathcal{N}(\mu_*, \Sigma_*), 
         \text{where } \mu_* = K(X_*, X) [K(X, X) + \sigma_n^2 I]^{-1} y, \\
        & \Sigma_* = K(X_*, X_*) - K(X_*, X) [K(X, X) + \sigma_n^2 I]^{-1} K(X, X_*).
    \end{split}
\end{equation}
From this interpretation of GP, we note that its core idea is to project the input $x$ to a (possibly infinite) feature space $\psi(x)$ and compare them in the Reproducing Kernel Hilbert Space (RKHS) defined by kernel.

\textbf{Acquisition Function Optimization:}
Given the posterior of GP surrogate model, the next step is to decide a query point $x_n$. The exploitation and exploration balance is achieved by designing an acquisition function $\alpha(x|\mathcal{D}_n)$. Through numerous acquisition functions exist~\cite{shahriariTakingHumanOut2016}, we follow~\cite{oliveiraBayesianOptimisationUncertain2019a, bogunovicAdversariallyRobustOptimization2018} and adopt the Upper Confidence Bound (UCB) acquisition:
\begin{equation}\label{eq_ucb}
    \alpha(x|\mathcal{D}_n) = \mu_*(x) + \beta \sigma_*(x),
\end{equation}
\noindent where $\beta$ is a hyper-parameter to control the level of exploration.

\section{Proposed Method}\label{sec_method}
To cope with randomness during the optimization process, we aim to build a robust surrogate that can directly accept the uncertain inputs of arbitrary distributions and propagate the input uncertainty into the posterior. Inspired by the weight-space interpretation of GP, we empower GP kernel with MMD to compare the uncertain inputs in RKHS. In this way, the input randomness is considered during the covariance computation and naturally reflected in the resulting posterior , which then can be used to guide the search for a robust optimum(Sec.~\ref{sec_surrogate}). To further accelerate the posterior inference, we employ \nystrom{} approximation to stabilize the MMD estimation and reduce its space complexity (Sec.~\ref{sec_nystrom}).

\subsection{Modeling the Uncertain Inputs}\label{sec_surrogate}
Assume $P_x\in\mathcal{P}_{\mathcal{X}} \subset \mathcal{P}$ are a set of distribution densities over $\mathbb{R}^d$, representing the distributions of the uncertain inputs. We are interested in building a GP surrogate over the probability space $\mathcal{P}$, which requires to measure the difference between the uncertain inputs.

To do so, we turn to the Integral Probabilistic Metric (IPM)~\cite{muller1997integral}. The basic idea behind IPM is to define a distance measure between two distributions $P$ and $Q$ as the supremum over a class of functions $\mathcal{G}$ of the absolute expectation difference:
\begin{equation}
    d(P, Q) = \sup_{g\in \mathcal{G}} |\mathbb{E}_{u\sim P} g(u) - \mathbb{E}_{v\sim Q}g(v)|,
\end{equation}
\noindent where $\mathcal{G}$ is a class of functions that satisfies certain conditions. Different choices of $\mathcal{G}$ lead to various IPMs. For example, if we restrict the function class to be uniformly bounded in RKHS we can get the MMD~\cite{gretton2012kernel}, while a Lipschitz-continuous $\mathcal{G}$ realizes the Wasserstein distance~\cite{gretton2019interpretable}. 

In this work, we choose MMD as the distance measurement for the uncertain inputs because of its intrinsic connection with distance measurement in RKHS. Given a characteristic kernel $k: \mathbb{R}^d\times \mathbb{R}^d \to \mathbb{R}$ and associate RKHS $\mathcal{H}_k$, define the mean map $\psi: \mathcal{P} \to \mathcal{H}_k$ such that $\langle \psi(P), g\rangle  = \mathbb{E}_P[g], \forall g \in \mathcal{H}_k$. The MMD between $P, Q\in \mathcal{P}$ is defined as:
\begin{equation}
    \text{MMD}(P, Q) = \sup_{||g||_k\leq 1}[\mathbb{E}_{u\sim P} g(u) - \mathbb{E}_{v\sim Q} g(v)] = || \psi_P - \psi_Q ||,
\end{equation}
Without any additional assumption on the input distributions, except we can get $m$ samples $\{ u_i \}_{i=1}^m, \{ v_i \}_{i=1}^m$ from $P,Q$ respectively, MMD can be empirically estimated as follows~\cite{muandet2017kernel}:
\begin{equation}\label{eq_mmd_empirical_estimator}
    \begin{split}
         \text{MMD}^2(P, Q)  
        \approx \frac{1}{m(m-1)} \sum_{1\le i,j\le m,i\neq j}  \left(k(u_i, u_j)
        +  k(v_i, v_j)\right)  - \frac{2}{m^2} \sum_{1\le i,j\le m}  k(u_i, v_j),
    \end{split}
\end{equation}

To integrate MMD into the GP surrogate, we design an MMD-based kernel over $\mathcal{P}$ as follows:
\begin{equation}\label{eq_MMD_kernel}
    \hat{k}(P, Q) = \exp(-\alpha \text{MMD}^2(P, Q) ),
\end{equation}
\noindent with a learnable scaling parameter $\alpha$. This is a valid kernel, and universal w.r.t. $C(\mathcal{P})$ under mild conditions (see Theorem 2.2, \cite{Christmann2010UniversalKO}). Also, it is worth to mention that, to compute the GP posterior, we only need to sample $m$ points from the input distributions, but do not require their corresponding function values.

With the MMD kernel, our surrogate model places a prior $\mathcal{GP}(0, \hat{k}(P_{x}, P_{x^\prime}))$ and obtain a dataset $\mathcal{D}_n = \{ (\hat x_i, y_i)|\hat x_i \sim P_{x_i}, i=1,2,...,n) \}$. The posterior is Gaussian with mean and variance:
\begin{align}
\hat \mu_n(P_\ast) &= \hat {\bf k}_n(P_\ast)^T(\hat {\bf K}_n + \sigma^2 {\bf I})^{-1} {\bf y}_n\label{exact_mean}\\
\hat\sigma_n^2(P_\ast) &= \hat k(P_\ast,P_\ast) - \hat {\bf k}_n(P_\ast)^T(\hat {\bf K}_n + \sigma^2 {\bf I})^{-1}\hat {\bf k}_n(P_\ast),\label{exact_variance}
\end{align}

where ${\bf y}_n := [y_1,\cdots,y_n]^T$, $\hat {\bf k}_n(P_\ast):=[\hat k(P_\ast,P_{x_1}),\cdots,\hat k(P_\ast,P_{x_n} )]^T$ and $[\hat {\bf K}_n]_{ij} = \hat k(P_{x_i},P_{x_j})$.

\subsection{Boosting posterior inference with \nystrom{} Approximation}\label{sec_nystrom}
To derive the posterior distribution of our robust GP surrogate, it requires estimating the MMD between each pair of inputs. Gretton \textit{et al.} prove the empirical estimator in Eq.~\ref{eq_mmd_empirical_estimator} approximates MMD in a bounded and asymptotic way~\cite{gretton2012kernel}. However, the sampling size $m$ used for estimation greatly affects the approximation error and insufficient sampling leads to a high estimation variance(ref. Figure~\ref{fig_varianced_mmd_estimation}).

Such an MMD estimation variance causes numerical instability of the covariance matrix and propagates into the posterior distribution and acquisition function, rendering the search for optimal query point a challenging task. Figure~\ref{fig_empirical_20} gives an example of MMD-GP posterior with insufficient samples, which produces a noisy acquisition function and impedes the search of optima. Increasing the sampling size can help alleviate this issue. However, the computation and space complexities of the empirical MMD estimator scale quadratically with the sampling size $m$. This leaves us with a dilemma that insufficient sampling results in a highly-varied posterior while a larger sample size can occupy significant GPU memory and reduce the ability for parallel computation.

To reduce the space and computation complexity while retaining a stable MMD estimation, we resort to the \nystrom{} approximation~\cite{williams2000}. This method alleviates the computational cost of kernel matrix by randomly selecting $h$ subsamples from the $m$ samples($h\ll m$) and computes an approximated matrix via $\Tilde{K}=K_{mh}K_h^{+}K_{mh}^T$. Combining this with the MMD definition gives its \nystrom{} estimator: 
\begin{equation}\label{eq_nystrom_estimator}
    \begin{split}
       {\text{MMD}}^2(P, Q) &= \mathbb{E}_{u, u^\prime \sim P \bigotimes P} [k(u, u^\prime)]
            + \mathbb{E}_{v, v^\prime \sim Q \bigotimes Q} [k(v, v^\prime)] - 2\mathbb{E}_{u, v\sim P \bigotimes Q} [k(u, v)] \\
        &\approx \frac{1}{m^2} \mathbf{1}_m^T U \mathbf{1}_m 
        + \frac{1}{m^2} \mathbf{1}_m^T V \mathbf{1}_m
        - \frac{2}{m^2} \mathbf{1}_m^T W \mathbf{1}_m  \\
        &\approx \frac{1}{m^2} \mathbf{1}_m^T U_{mh} U_h^{+} U_{mh}^T \mathbf{1}_n 
        + \frac{1}{m^2} \mathbf{1}_m^T V_{mh} V_h^{+} V_{mh}^T \mathbf{1}_m
        - \frac{2}{m^2} \mathbf{1}_m^T W_{mh} W_h^{+} W_{mh}^T \mathbf{1}_m
    \end{split}
\end{equation}
\noindent where $U=K({\bf u}, {\bf u}^\prime)$, $V=K({\bf v}, {\bf v}^\prime)$, $W=K({\bf u}, {\bf v})$ are the kernel matrices, $\mathbf{1}_m$ represents a m-by-1 vector of ones, $m$ defines the sampling size and $h$ controls the sub-sampling size. 
Note that this \nystrom{} estimator reduces the space complexity of posterior inference from $O(MNm^2)$ to $O(MNmh)$, where $M$ and $N$ are the numbers of training and testing samples, $m$ is the sampling size for MMD estimation while $h \ll m$ is the sub-sampling size. This can significantly boost the posterior inference of robust GP by allowing more inference to run in parallel on GPU.

\section{Theoretical Analysis}\label{sec_theoretical_analysis}

Assume $x \in \mathcal{X} \subset \mathbb{R}^d $, and $P_x\in\mathcal{P}_{\mathcal{X}} \subset \mathcal{P}$ are a set of distribution densities over $\mathbb{R}^d$, representing the distribution of the noisy input. Given a characteristic kernel $k: \mathbb{R}^d\times \mathbb{R}^d \to \mathbb{R}$ and associate RKHS $\mathcal{H}_k$, we define the mean map $\psi: \mathcal{P} \to \mathcal{H}_k$ such that $\langle \psi(P), g\rangle  = \mathbb{E}_P[g], \forall g \in \mathcal{H}_k$. 

We consider a more general case. Choosing any suitable functional $L$ such that $\hat k(P,P') := L(\psi_P, \psi_{P'})$ is a positive-definite kernel over $\mathcal{P}$, for example the linear kernel $ \langle \psi_P,\psi_{P'}\rangle_k$ and radial kernels
$\exp(-\alpha\Vert\psi_P - \psi_{P'}\Vert^2_k)$ using the MMD distance as a metric. Such a kernel $\hat k$ is associated with a RKHS $\mathcal{H}_{\hat k}$ containing functions over the space of probability measures $\mathcal{P}$.

One important theoretical guarantee to conduct $\mathcal{GP}$ model is that our object function can be approximated by functions in $\mathcal{H}_{\hat k}$, which relies on the universality of $\hat k$. Let $C(\mathcal{P})$ be the class of continuous functions over $\mathcal{P}$ endowed with the topology of weak convergence and the associated Borel $\sigma$-algebra, and we define $\hat f \in C(\mathcal{P})$ such that
\begin{equation*} \hat f (P) := \mathbb{E}_P[f] ,\forall P \in \mathcal{P},\end{equation*} which is just our object function, For $\hat k$ be radial kernels, it has been shown that $\hat k$ is universal w.r.t $C(\mathcal{P})$ given that $\mathcal{X}$ is compact and the mean map $\psi$ is injective \cite{Christmann2010UniversalKO,Muandet2012LearningFD}. For $\hat k$ be linear kernel which is not universal, it has been shown in Lemma 1, \cite{oliveira2019bayesian} that $\hat f \in \mathcal{H}_{\hat k}$ if and only if $f \in \mathcal{H}$ and further $\Vert \hat f \Vert_{\hat k} = \Vert f\Vert_k$. Thus, in the remain of this chapter, we may simply assume $\hat f \in \mathcal{H}_{\hat k}$.

Suppose we have an approximation kernel function $\tilde k(P,Q)$ near to the exact kernel function $\hat k(P,Q)$.
The mean  $\hat \mu_n(p_\ast)$ and variance $\hat\sigma_n^2(p_\ast)$ are approximated by
\begin{align}
\tilde \mu_n(P_\ast) &= \tilde {\bf k}_n(P_\ast)^T(\tilde {\bf K}_n + \sigma^2 {\bf I})^{-1} {\bf y}_n \label{estimate_mean}\\
\tilde \sigma_n^2(P_\ast) &= \tilde k(P_\ast,P_\ast) - \tilde {\bf k}_n(P_\ast)^T(\tilde {\bf K}_n + \sigma^2 {\bf I})^{-1}\tilde {\bf k}_n(P_\ast),\label{estimate_variance}
\end{align}
where ${\bf y}_n := [y_1,\cdots,y_n]^T$, $\tilde {\bf k}_n(P_\ast):=[\tilde k(P_\ast,P_1),\cdots,\tilde k(P_\ast,P_n )]^T$ and $[\tilde {\bf K}_n]_{ij} = \tilde k(P_i,P_j)$.

The maximum information gain corresponding to the kernel $\hat k$ is denoted as   
$$\hat\gamma_n := \sup_{\mathcal{R} \in \mathcal{P}_\mathcal{X};\vert \mathcal{R}\vert = n} \hat I({\bf y}_n; \hat{\bf f}_n\vert \mathcal{R}) = \frac{1}{2}\ln \det ({\bf I} + \sigma^{-2} \hat{\bf K}_n),$$


Denote $e(P,Q) = \hat k(P,Q) - \tilde k(P,Q)$ as the error function when estimating the kernel $\hat k$. We suppose $e(P,Q)$ has an upper bound with high probability:
\begin{asm}\label{assumption_estimation_bound}
For any $\varepsilon > 0$, $P,Q \in \mathcal{P_{\mathcal{X}}}$, we may choose an estimated $\tilde k(P,Q)$ such that the error function $e(P,Q)$ can be upper-bounded by $e_\varepsilon$ with probability at least $1-\varepsilon$, that is, $\mathbb{P}\left(|e(P,Q)| \le e_\varepsilon\right) > 1-\varepsilon.$ 
\end{asm}

{\bf Remark.} Note that this assumption is standard in our case: we may assume $\max_{x \in \mathcal{X}} \Vert \phi \Vert_k \le \Phi $, where $\phi$ is the feature map corresponding to the $k$. Then when using empirical estimator, the error between $\text{MMD}_{\text{empirical}}$ and $\text{MMD}$ is controlled by $4\Phi\sqrt{2\log(6/\varepsilon)m^{-1}}$ with probability at least $1- \varepsilon$ according to Lemma E.1, \cite{Chatalic2022NystrmKM}. When using the \nystrom{} estimator, the error has a similar form as the empirical one, and under mild conditions, when $h = O(\sqrt{m}\log(m))$, we get the error of the order $O(m^{-{1/2}}\log(1/\varepsilon))$ with probability at least $1-\varepsilon$. One can check more details in Lemma \ref{nystromerr}.

Now we restrict our Gaussian process in the subspace $\mathcal{P}_\mathcal{X} = \{ P_x, x\in\mathcal{X} \}\subset \mathcal{P}$. We assume the observation $y_i = f(x_i) + \zeta_i$ with the noise $\zeta_i$. The input-induced noise is defined as $\Delta f_{p_{x_i}} := f(x_i) - \mathbb{E}_{P_{x_i}}[f] = f(x_i) - \hat f(P_{x_i})$. Then the total noise is $y_i - \mathbb{E}_{P_{x_i}}[f] = \zeta_i + \Delta f_{p_{x_i}}$.
We can state our main result, which gives a cumulative regret bound under inexact kernel calculations,

\begin{thm}\label{uncertainty_regret_bound}
 Let $\delta >0, f\in \mathcal{H}_k,$ and the corresponding $ \Vert \hat f \Vert_{\hat k}  \le b, \max_{x\in \mathcal{X}} |f(x)| = M$. Suppose the observation noise $\zeta_i = y_i -f(x_i) $ is $\sigma_\zeta$-sub-Gaussian, and thus with high probability $|\zeta_i|< A$ for some $A>0$. Assume that both $k$ and $P_x$ satisfy the conditions for $\Delta f_{P_x}$ to be $\sigma_E$-sub-Gaussian, for a given $\sigma_E > 0$. Then, under Assumption \ref{assumption_estimation_bound} with $\varepsilon >0$ and corresponding $e_{\varepsilon}$, setting $\sigma^2 = 1+\frac{2}{n}$, running Gaussian Process with acquisition function 
 \begin{align} 
\tilde \alpha(x|\mathcal{D}_n) &= \tilde\mu_n(P_x)  + \beta_n  \tilde\sigma_n(P_x) \label{aquisition}\\
\text{where } \beta_n =& \left (b + \sqrt{\sigma_E^2 + \sigma_\zeta^2}\sqrt{2\left(\hat \gamma_n + 1 - \ln\delta\right)  } \right), \nonumber
\end{align}
  we have that the uncertain-inputs cumulative regret satisfies:
 \begin{equation}   \tilde R_n \in O\left ( \sqrt{n\hat\gamma_n(\hat\gamma_n - \ln \delta)}  + n^2\sqrt{(\hat\gamma_n - \ln \delta) e_\varepsilon}  + n^3 e_\varepsilon  \right) \label{regret_bound}\end{equation}
 with probability at least $1-\delta - n\varepsilon$. Here $\tilde R_n = \sum_{t=1}^n \tilde r_t$, and $\tilde r_t = \max_{x\in \mathcal{X}} \mathbb{E}_{P_x}[f] - \mathbb{E}_{P_{x_t}} [f]$
\end{thm}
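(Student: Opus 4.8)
The plan is to reduce the regret bound under inexact kernels to the classical GP-UCB analysis (as in \cite{oliveira2019bayesian,chowdhury2017kernelized}) plus a perturbation analysis quantifying how the $e_\varepsilon$-error in each kernel evaluation propagates through the posterior mean and variance. First I would set up the standard concentration step: conditioned on the event (holding with probability $1-\delta$) that the RKHS-valued confidence bound holds, we have $|\hat f(P_x) - \hat\mu_n(P_x)| \le \beta_n \hat\sigma_n(P_x)$ for all $x$, where $\hat\mu_n,\hat\sigma_n$ are the \emph{exact} posterior quantities of Eqs.~\eqref{exact_mean}--\eqref{exact_variance}. This uses that the total noise $\zeta_i + \Delta f_{P_{x_i}}$ is $\sqrt{\sigma_\zeta^2+\sigma_E^2}$-sub-Gaussian and that $\|\hat f\|_{\hat k}\le b$; the choice $\sigma^2 = 1+2/n$ and the information-gain term $\hat\gamma_n$ enter exactly as in the cited self-normalized bound. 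Simultaneously I would invoke Assumption~\ref{assumption_estimation_bound} on each of the $O(n)$ kernel entries actually used at step $t$ (the vector $\hat{\bf k}_n(P_\ast)$ and the diagonal term; the Gram matrix entries were fixed in earlier rounds), so that with probability at least $1-n\varepsilon$ every relevant error $|e(P,Q)| \le e_\varepsilon$.

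Next comes the perturbation analysis, which I expect to be the technical heart. I would bound $|\tilde\mu_n(P_x) - \hat\mu_n(P_x)|$ and $|\tilde\sigma_n(P_x) - \hat\sigma_n(P_x)|$ in terms of $e_\varepsilon$. Writing $\tilde{\bf K}_n = \hat{\bf K}_n + E_n$ with $\|E_n\|_{\text{op}} \le \|E_n\|_F \le n\, e_\varepsilon$, and similarly $\tilde{\bf k}_n(P_\ast) = \hat{\bf k}_n(P_\ast) + {\bf e}$ with $\|{\bf e}\|\le \sqrt{n}\,e_\varepsilon$, I would use the resolvent identity $(\hat{\bf K}_n+\sigma^2 I)^{-1} - (\tilde{\bf K}_n+\sigma^2 I)^{-1} = (\hat{\bf K}_n+\sigma^2 I)^{-1} E_n (\tilde{\bf K}_n+\sigma^2 I)^{-1}$ together with $\|(\cdot+\sigma^2 I)^{-1}\|_{\text{op}} \le \sigma^{-2} \le 1$ and the bounds $\|{\bf y}_n\|\le \sqrt n (M+A)$, $\|\hat{\bf k}_n(P_\ast)\|\le \sqrt n\, \bar k$ (kernel bounded, say by $1$ for the radial case). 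This yields $|\tilde\mu_n(P_x)-\hat\mu_n(P_x)| = O(n^{3/2} e_\varepsilon)$ and, after a similar but slightly more careful quadratic-form estimate, $|\tilde\sigma_n^2(P_x)-\hat\sigma_n^2(P_x)| = O(n^2 e_\varepsilon)$, hence $|\tilde\sigma_n(P_x)-\hat\sigma_n(P_x)| = O(n\sqrt{e_\varepsilon})$ after taking square roots (using $|\sqrt a-\sqrt b|\le\sqrt{|a-b|}$). I would also need $\tilde\sigma_n \le \hat\sigma_n + O(n\sqrt{e_\varepsilon})$ going the other direction, which is symmetric.

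With both pieces in hand, the instantaneous regret is controlled in the usual way: $\tilde r_t = \hat f(P_{x^*}) - \hat f(P_{x_t}) \le \tilde\alpha(x_t|\mathcal D_{t-1}) - \hat f(P_{x_t}) + (\text{approx. error at }x^*)$ by the greedy choice of $x_t$, and then $\tilde\alpha(x_t|\mathcal D_{t-1}) - \hat f(P_{x_t}) \le 2\beta_t \hat\sigma_{t-1}(P_{x_t}) + (\text{approx. errors at }x_t)$ via the exact confidence bound and the $\mu$- and $\sigma$-perturbation bounds. Collecting terms, each round contributes $O(\beta_t \hat\sigma_{t-1}(P_{x_t}))$ plus $O(n^{3/2}e_\varepsilon)$ from the mean perturbation plus $O(\beta_t\, n\sqrt{e_\varepsilon})$ from the variance perturbation. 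Summing over $t=1,\dots,n$: the first sum gives $O(\sqrt{n\,\hat\gamma_n}\,\beta_n) = O(\sqrt{n\hat\gamma_n(\hat\gamma_n-\ln\delta)})$ via Cauchy--Schwarz and the standard bound $\sum_t \hat\sigma_{t-1}^2 \le C\hat\gamma_n$; the mean-perturbation sum gives $O(n\cdot n^{3/2}e_\varepsilon)$, which I would coarsely bound by $O(n^3 e_\varepsilon)$ (or use $\sqrt{e_\varepsilon}\le 1$ bookkeeping to match the stated exponents); and the variance-perturbation sum gives $O(n\cdot\beta_n\cdot n\sqrt{e_\varepsilon}) = O(n^2\sqrt{(\hat\gamma_n-\ln\delta)e_\varepsilon})$. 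These three contributions are exactly the three terms of \eqref{regret_bound}. The main obstacle is getting the $\sigma$-perturbation bound with the right power of $e_\varepsilon$ — in particular the quadratic form $\tilde{\bf k}_n^T(\tilde{\bf K}_n+\sigma^2 I)^{-1}\tilde{\bf k}_n$ must be expanded so that the leading error is linear in $e_\varepsilon$ (giving $n^2 e_\varepsilon$ inside the square root), and one must be careful that the square-root step does not lose a factor of $n$; the rest is routine resolvent and norm bookkeeping.
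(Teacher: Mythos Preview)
Your proposal follows essentially the same architecture as the paper: perturbation bounds on $\tilde\mu_n-\hat\mu_n$ and $\tilde\sigma_n^2-\hat\sigma_n^2$ via resolvent/matrix-norm bookkeeping, combined with the exact-kernel self-normalized confidence bound from Lemma~\ref{exactkernel_regretbound}, and then the standard UCB regret decomposition.

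Two small points. First, your claimed order $|\tilde\mu_n-\hat\mu_n|=O(n^{3/2}e_\varepsilon)$ is too optimistic: the dominant cross term $\hat{\bf k}_n^T(\hat{\bf K}_n+\sigma^2 I)^{-1}E_n(\hat{\bf K}_n+\sigma^2 I)^{-1}{\bf y}_n$ is bounded by $\sqrt n\cdot n e_\varepsilon\cdot\sqrt n(M{+}A)=O(n^2 e_\varepsilon)$, which is what the paper obtains in Theorem~\ref{Statistics Error}. Since you coarsen the summed contribution to $O(n^3 e_\varepsilon)$ anyway, the final bound is unaffected, but the intermediate exponent should be corrected. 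Second, there is a minor tactical divergence at the summation step: you convert $\tilde\sigma_{t-1}\to\hat\sigma_{t-1}$ pointwise and then invoke $\sum_t\hat\sigma_{t-1}^2\le C\hat\gamma_n$ (valid for any point sequence by the elliptical potential lemma), whereas the paper sums $\tilde\sigma_{t-1}$ directly via $\sum_t\tilde\sigma_{t-1}\le\sqrt{4(n+2)\ln\det(I+\sigma^{-2}\tilde{\bf K}_n)}$ and then applies a separate log-determinant perturbation bound $|\ln\det(I+\sigma^{-2}\tilde{\bf K}_n)-\ln\det(I+\sigma^{-2}\hat{\bf K}_n)|\le \tfrac{n^{3/2}}{2\sigma^2}e_\varepsilon$ to land on $\hat\gamma_n$. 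Both routes produce the three terms in \eqref{regret_bound}; yours is arguably cleaner as it bypasses the information-gain perturbation entirely.
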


The proof of our main theorem \ref{uncertainty_regret_bound} can be found in appendix \ref{uncertainty_regret_bound_proof}.

The assumption that $\zeta_i$ is $\sigma_\zeta$-sub-Gaussian is standard in $\mathcal{GP}$ fields. The assumption that $\Delta f_{P_x}$ is $\sigma_E$-sub-Gaussian can be met when $P_x$ is uniformly bounded or Gaussian, as stated in Proposition 3, \cite{oliveira2019bayesian}. Readers may check the definition of sub-Gaussian in appendix, Definition \ref{subgaussiandef}.

To achieve an regret of order $\tilde R_n \in O(\sqrt{n}\hat\gamma_n)$ , the same order as the exact Improved $\mathcal{GP}$ regret \eqref{vanilla_regret}, and ensure this with high probability, we need to take $\varepsilon = O(\delta/n)$, $e_\varepsilon = O(n^{-\frac{5}{2}} \hat\gamma_n (\hat\gamma_n^{-2} \wedge n^{-\frac{1}{2}}))$, and this requires a sample size $m$ of order $O(n^5\hat\gamma_n^{-2}(\hat\gamma_n^{4} \vee n)\log(n))$ for MCMC approximation, or with a same sample size $m$ and a subsample size $h$ of order $O(n^{\frac{5}{2} + \nu}\hat\gamma_n^{-1-\nu}(\hat\gamma_n^{2} \vee n^{\frac{1}{2}}))$ for \nystrom{} approximation with some $\nu > 0$. Note that \eqref{regret_bound} only offers an upper bound for cumulative regret, in real applications the calculated regret may be much smaller than this bound, as the approximation error $e_{\epsilon}$ can be fairly small even with a few samples when the input noise is relatively weak.

To analysis the exact order of $\hat\gamma_n$ could be difficult, as it is influenced by the specific choice of embedding kernel $k$ and input uncertainty distributions $P_{x_i}, x_i \in \mathcal{X}$. Nevertheless, we can deduce the following result for a wide range of cases, showing that cumulative regret is sub-linear under mild conditions. One can check the proof in appendix \ref{gamma_bound_proof}.
\begin{thm}[Bounding the Maximum information gain]\label{gamma_bound}
Suppose $k$ is $r$-th differentiable with bounded derivatives and translation invariant, i.e., $k(x,y) = k(x-y,0)$.  Suppose the input uncertainty is i.i.d., that is, the noised input density satisfies $P_{x_i}(x) = P_0(x - x_i), \forall x_i \in \mathcal{X}$. Then if the space $\mathcal{X}$ is compact in $\mathbb{R}^d$, the maximum information gain $\hat\gamma_n$ satisfies 
\begin{equation}
    \hat\gamma_n = O(n^\frac{d(d+1)}{r+d(d+1)} \log(n)). \nonumber
\end{equation}
Thus, when $r > d(d+1)$, the accumulate regret is sub-linear respect to $n$, with sufficiently small $e_\varepsilon$.
\end{thm}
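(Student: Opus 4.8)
The plan is to bound the maximum information gain $\hat\gamma_n$ by first understanding the spectral decay of the kernel $\hat k$ on the set $\mathcal{P}_\mathcal{X} = \{P_x : x \in \mathcal{X}\}$, and then invoking the standard information-gain-versus-eigenvalue machinery (as in Srinivas et al.\ and its refinements). Concretely, I would first exploit the i.i.d.\ assumption $P_{x_i}(x) = P_0(x - x_i)$ to transfer smoothness from the base kernel $k$ to $\hat k$: the mean embedding satisfies $\psi_{P_{x_i}} = \mathbb{E}_{P_0}[\phi(\cdot + x_i)]$ (suitably interpreted), so the map $x \mapsto \psi_{P_x}$ is a convolution of $\phi$ against the fixed density $P_0$, and hence the induced kernel $g(x,y) := \hat k(P_x, P_y)$ on $\mathcal{X} \times \mathcal{X}$ is itself translation invariant: $g(x,y) = G(x-y)$ where $G$ is obtained from $k(\cdot,0)$ by a double convolution with $P_0$ (for the linear kernel case), or a smooth radial function thereof (for the Gaussian-of-MMD case, one composes with $\exp(-\alpha\,\Vert\cdot\Vert^2)$, which preserves smoothness since the argument $\Vert\psi_{P_x}-\psi_{P_y}\Vert_k^2 = G(0) - $ something smooth in $x-y$ is a smooth function of $x-y$). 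The key point is that convolution does not decrease smoothness, so $G$ is at least $r$-times differentiable with bounded derivatives on the compact set $\mathcal{X}-\mathcal{X}$.

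Second, I would translate this smoothness into a decay rate for the eigenvalues of the integral operator associated with $g$ on $\mathcal{X}$ (with respect to, say, the uniform measure on the compact domain). A translation-invariant kernel on a compact subset of $\mathbb{R}^d$ whose symbol $G$ is $C^r$ with bounded derivatives has Fourier transform decaying like $\vert\omega\vert^{-r}$ (up to the dimension-dependent bookkeeping), and by standard results (Widom-type estimates, or the arguments used for Mat\'ern kernels) the ordered eigenvalues satisfy $\lambda_j = O(j^{-r/d})$ — or more precisely one gets the rate with the exponent structure that, after the information-gain calculation, produces the stated $d(d+1)/(r+d(d+1))$ exponent. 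I suspect the $d(d+1)$ rather than plain $d$ arises because the effective ambient dimension here is not $d$ but something larger: the mean embeddings live in the RKHS $\mathcal{H}_k$ and, even though parametrized by $x\in\mathbb{R}^d$, the relevant smoothness/dimension trade-off when bounding $\hat\gamma_n$ via the tail-sum-of-eigenvalues bound $\hat\gamma_n \lesssim \inf_{T} \big( T\log n + n \sum_{j > T}\lambda_j \big)$ picks up an extra $d$ factor from the polynomial volume growth in the covering/packing step, giving $d\cdot(d+1)$ effectively. I would carry out exactly this optimization: plug $\lambda_j = O(j^{-\beta})$ into the tail bound, optimize over the truncation level $T$, and read off $\hat\gamma_n = O(n^{1/(1+\beta/\cdots)}\log n)$, matching the claimed exponent after identifying $\beta$ in terms of $r$ and $d$.

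Third, with $\hat\gamma_n = O(n^{d(d+1)/(r+d(d+1))}\log n)$ in hand, sub-linearity of the cumulative regret follows by substituting into \eqref{regret_bound}: the leading term is $\sqrt{n\hat\gamma_n(\hat\gamma_n - \ln\delta)} = O(n^{1/2}\hat\gamma_n)$ up to logs, which is $o(n)$ precisely when $\hat\gamma_n = o(n^{1/2})$, i.e.\ when $d(d+1)/(r+d(d+1)) < 1/2$, equivalently $r > d(d+1)$; and the remaining two terms $n^2\sqrt{(\hat\gamma_n-\ln\delta)e_\varepsilon}$ and $n^3 e_\varepsilon$ are $o(n)$ provided $e_\varepsilon$ is taken small enough (as quantified in the discussion following Theorem~\ref{uncertainty_regret_bound}, e.g.\ $e_\varepsilon = O(n^{-5/2}\hat\gamma_n(\hat\gamma_n^{-2}\wedge n^{-1/2}))$), which is achievable by choosing the sample size $m$ (and subsample size $h$ for the \nystrom{} variant) appropriately. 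This last step is essentially bookkeeping once the information gain bound is established.

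The main obstacle, I expect, is the second step: rigorously pinning down the eigenvalue decay rate of the integral operator for the transformed kernel $g$ on a compact domain, with the correct dimensional exponent. Subtleties include (i) handling the fact that $\mathcal{X}$ is a compact subset rather than a torus, so one cannot directly Fourier-analyze and must use a restriction/extension argument or a Widom-type trace estimate that accounts for the boundary; (ii) confirming that the radial-kernel composition $\exp(-\alpha\Vert\cdot\Vert_k^2)$ genuinely preserves the $C^r$ smoothness quantitatively (it does, since it is the composition of a $C^\infty$ function with a $C^r$ argument, but the constants and the boundedness of derivatives need the compactness of $\mathcal{X}$ to control $\Vert\psi_{P_x}\Vert_k$); and (iii) tracking exactly where the extra factor of $d$ enters to yield $d(d+1)$ instead of $d$ — I would look carefully at the covering-number/volumetric step in the information-gain bound, since that is the natural place an additional polynomial-in-$d$ dimensional penalty appears. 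Everything else reduces to invoking the universality discussion already given in the excerpt and the standard GP-UCB regret decomposition.
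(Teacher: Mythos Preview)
Your overall strategy matches the paper's: show that the induced kernel $g(x_1,x_2) := \hat k(P_{x_1},P_{x_2})$ on $\mathcal{X}\subset\mathbb{R}^d$ is translation invariant with $r$ bounded derivatives (via the i.i.d.\ assumption $P_{x_i}(x)=P_0(x-x_i)$ and the translation invariance of $k$), then convert smoothness to eigenvalue decay, then convert eigenvalue decay to an information-gain bound. The paper does exactly this in three lines, invoking two citations: Theorem~4 of K\"uhn (1987) gives $\lambda_j(T_{g,\mu}) = O(j^{-1-r/d})$ for a $C^r$ translation-invariant kernel on compact $\mathcal{X}\subset\mathbb{R}^d$, and Theorem~5 of Srinivas et al.\ (2009) converts this directly into $\hat\gamma_n = O(n^{d(d+1)/(r+d(d+1))}\log n)$.

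Two points where your details go astray. First, the eigenvalue decay is $\lambda_j = O(j^{-1-r/d})$, not $O(j^{-r/d})$; the extra $-1$ is the standard K\"uhn/Widom rate for $C^r$ kernels in $d$ dimensions. Second, and more importantly, your speculation that the $d(d+1)$ arises because ``the effective ambient dimension here is not $d$ but something larger'' due to the mean embeddings living in $\mathcal{H}_k$ is incorrect. Once you have reduced to a translation-invariant $C^r$ kernel $g$ on $\mathcal{X}\subset\mathbb{R}^d$, the uncertain-input structure has been completely absorbed: you are in the \emph{standard} GP-UCB setting on $\mathbb{R}^d$, and the $d(d+1)$ exponent is exactly what Srinivas et al.\ obtain for ordinary Mat\'ern-type kernels on $\mathbb{R}^d$ (their analysis of $\gamma_T$ via the tail-sum-of-eigenvalues picks up a $d$-dependent discretization factor on top of the $d$ in the eigenvalue exponent). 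There is no additional dimensional penalty coming from the probability-measure parametrization; your point (iii) is chasing a phenomenon that is not there. The boundary/restriction issue you flag in (i) is precisely what K\"uhn's theorem handles, and (ii) is correct and the paper handles it exactly as you suggest.
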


\section{Evaluation}\label{sec_evaluation}
In this section, we first experimentally demonstrate \algo{}'s ability to model uncertain inputs of arbitrary distributions, then validate the \nystrom{}-based inference acceleration for GP posterior, followed by experiments on robust optimization of synthetic functions and real-world benchmark.


\subsection{Robust Surrogate}
\textbf{Modeling arbitrary uncertain inputs}: 
We demonstrate \model{}'s capabilities by employing an RKHS function as the black-box function and randomly selecting 10 samples from its input domain. Various types of input randomness are introduced into the observation and produce training datasets of $\mathcal{D} = \{(x_i, f(x_i+\delta_i))| \delta_i \sim P_{x_i} \}_{i=1}^{10}$ with different $P_x$ configurations. Figure~\ref{fig_GP_gaussian} and \ref{fig_MMDGP_gaussian} compare the modeling results of a conventional GP and \model{} under a Gaussian input uncertainty $P_x=\mathcal{N}(0, 0.01^2)$. We observe that the GP model appears to overfit the observed samples without recognizing the input uncertainty, whereas \model{} properly incorporates the input randomness into its posterior.

\begin{figure}
     \centering
     \begin{subfigure}[t]{0.24\textwidth}
         \centering
         \includegraphics[width=\textwidth]{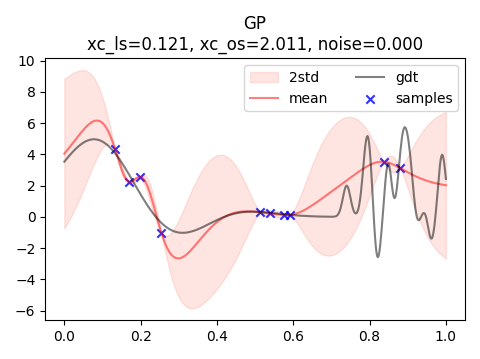}
         \caption{GP posterior with a Gaussian input uncertainty $P=\mathcal{N}(0, 0.01)$.}
         \label{fig_GP_gaussian}
     \end{subfigure}
     \hfill
     \begin{subfigure}[t]{0.24\textwidth}
         \centering
         \includegraphics[width=\textwidth]{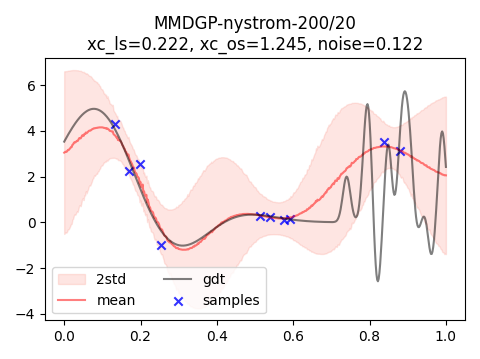}
         \caption{\model{} posterior with an input uncertainty $P=\mathcal{N}(0, 0.01)$.}
         \label{fig_MMDGP_gaussian}
     \end{subfigure}
     \hfill
     \begin{subfigure}[t]{0.24\textwidth}
         \centering
         \includegraphics[width=\textwidth]{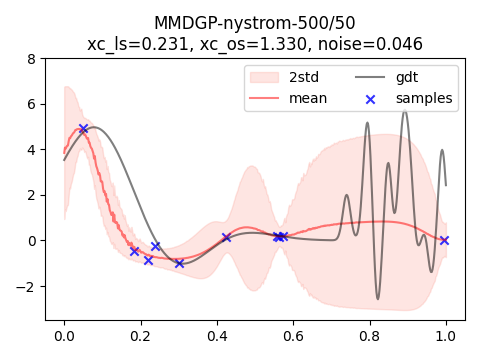}
         \caption{\model{} posterior with a variance-changing beta distribution.}
         \label{fig_MMDGP_varying_beta}
     \end{subfigure}
     \hfill
     \begin{subfigure}[t]{0.24\textwidth}
         \centering
         \includegraphics[width=\textwidth]{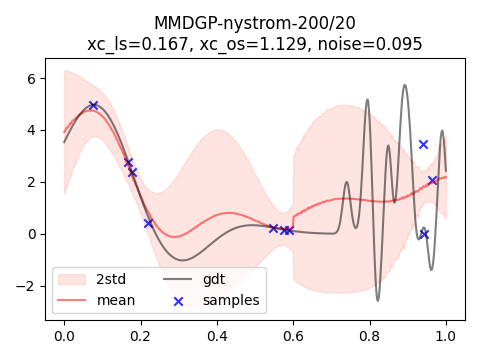}
         \caption{\model{} posterior with a Chi-squared distribution of changing DoF.}
         \label{fig_MMDGP_chi2}
     \end{subfigure}
\caption{Modeling results under different types of input uncertainties.}
\label{fig_modeling_with_different_input_distributions}
\vspace{-1em}
\end{figure}

To further examine our model's ability under complex input uncertainty, we design the input distribution to follow a beta distribution with input-dependent variance: $P_x=beta(\alpha=0.5, \beta=0.5, \sigma=0.9(\sin{4\pi x}+1) )$. The \model{} posterior is shown in Figure~\ref{fig_MMDGP_varying_beta}. As the input variance $\sigma$ changes along $x$, inputs from the left and right around a given location $x_i$ yield different MMD distances, resulting in an asymmetric posterior (\textit{e.g.}, around $x=0.05$ and $x=0.42$). This suggests that \model{} can precisely model the multimodality and asymmetry of the input uncertainty.

Moreover, we evaluated \model{} using a step-changing Chi-squared distribution $P_x = \chi^2(g(x), \sigma=0.01)$, where $g(x)=0.5$ if $x \in [0, 0.6]$, and $g(x)=7.0$ otherwise. This abrupt change in $g(x)$ significantly alters the input distribution from a sharply peaked distribution to a flat one with a long tail. Figure~\ref{fig_MMDGP_chi2} illustrates that our model can accurately capture this distribution shape variation, as evidenced by the sudden posterior change around $x=0.6$. This demonstrates our model can thoroughly quantify the characteristics of complex input uncertainties. 

\textbf{Comparing with the other surrogate models:} We also compare our model with the other surrogate models under the step-changing Chi-squared input distribution. The results are reported in Figure~\ref{fig_modeling_performance_chi2} and they demonstrate our model outperforms obviously under such a complex input uncertainty (see Appendix~\ref{app_sec_modeling_comparision} for more details)

\begin{figure}
     \centering
     \begin{subfigure}[t]{0.24\textwidth}
         \centering
         \includegraphics[width=\textwidth]{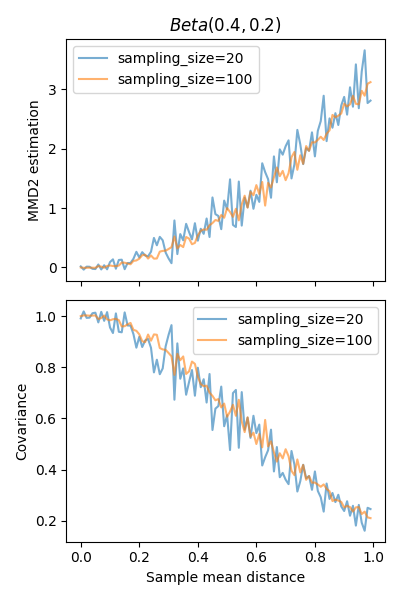}
         \caption{The empirical MMD and covariance values between two beta distributions $P$ and $Q$ as $Q$ moves away from $P$.}
         \label{fig_varianced_mmd_estimation}
     \end{subfigure}
     \hfill
     \begin{subfigure}[t]{0.24\textwidth}
         \centering
         \includegraphics[width=\textwidth]{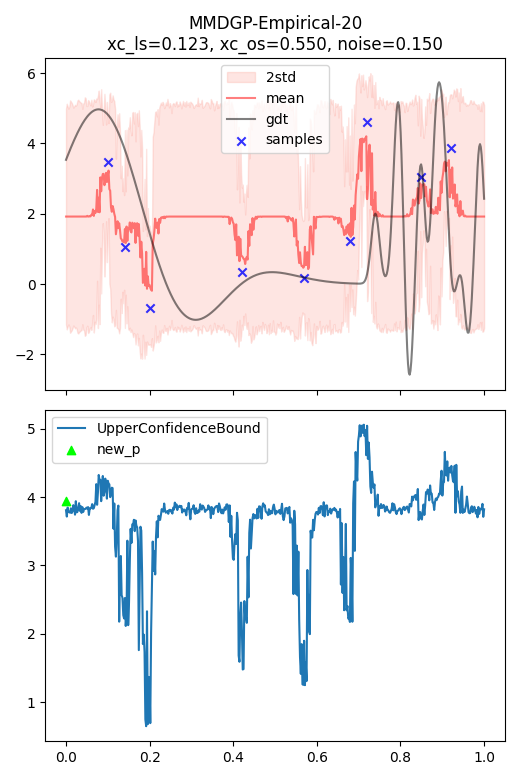}
         \caption{The noisy posterior derived from a sampling size of 20 (upper) traps the acq. optimizer at $x=0$ (lower).}
         \label{fig_empirical_20}
     \end{subfigure}
     \hfill
     \begin{subfigure}[t]{0.24\textwidth}
         \centering
         \includegraphics[width=\textwidth]{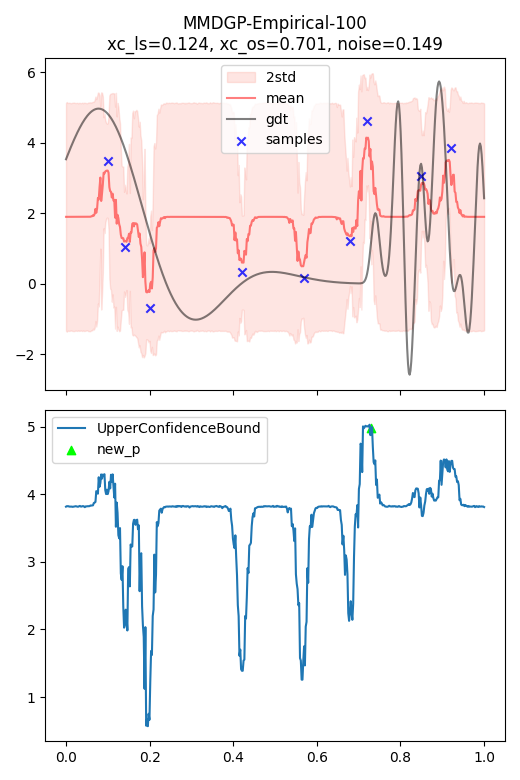}
         \caption{The posterior becomes smoother with a sampling size of 100 and acq. optimizer can easily identify the optima.}
         \label{fig_empirical_100}
     \end{subfigure}
     \hfill
     \begin{subfigure}[t]{0.24\textwidth}
         \centering
         \includegraphics[width=\textwidth]{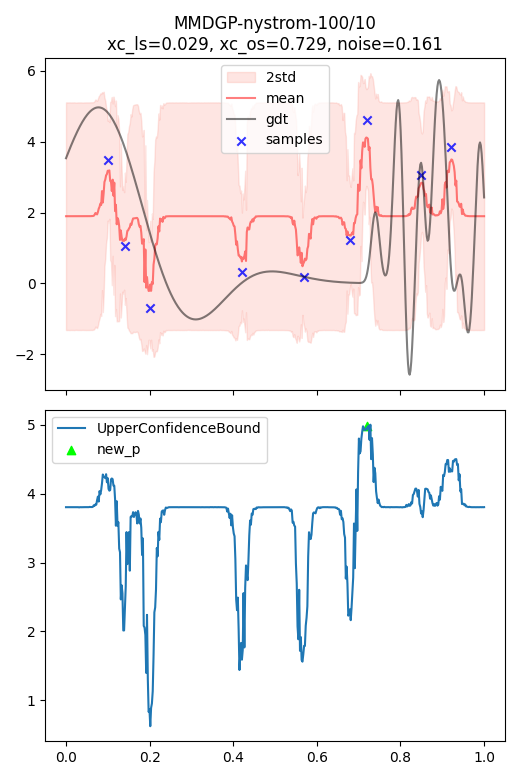}
         \caption{The \nystrom{} estimator with less memory consumption also produces a smooth posterior that is easy to optimize.}
         \label{fig_nystrom_100-10}
     \end{subfigure}
\caption{The posterior derived from the empirical and \nystrom{} MMD approximators with varying sampling sizes.}
\label{fig_inference_dilemma}
\vspace{-1em}
\end{figure}
    
\subsection{Accelerating the Posterior Inference}
\textbf{Estimation variance of MMD:} We first examine the variance of MMD estimation by employing two beta distributions $P=beta(\alpha=0.4, \beta=0.2, \sigma=0.1)$ and $Q=beta(\alpha=0.4, \beta=0.2, \sigma=0.1)+c$, where $c$ is an offset value. Figure~\ref{fig_varianced_mmd_estimation} shows the empirical MMDs computed via Eq.~\ref{eq_mmd_empirical_estimator} with varying sampling sizes as $Q$ moves away from $P$. We find that a sampling size of 20 is inadequate, leading to high estimation variance, and increasing the sampling size to 100 stabilizes the estimation.

We further utilize this beta distribution $P$ as the input distribution and derive the \model{} posterior via empirical estimator in Figure~\ref{fig_empirical_20}. Note that the MMD instability caused by inadequate sampling subsequently engenders a fluctuating posterior and culminates in a noisy acquisition function, which prevents the acquisition optimizer (\textit{e.g.}, L-BFGS-B in this experiment) from identifying the optima. Although Figure~\ref{fig_empirical_100} shows that this issue can be mitigated by using more samples during empirical MMD estimation, it is crucial to note that a larger sampling size significantly increases GPU memory usage because of its quadratic space complexity of $O(MNm^2)$ ($M$ and $N$ are the sample number of training and testing, $m$ is the sampling size for MMD estimation). This limitation severely hinders parallel inference for multiple samples and slows the overall speed of posterior computation.

Table~\ref{tab_inference_time} summarizes the \textbf{inference time} of \model{} posteriors at 512 samples with different sampling sizes. We find that, for beta distribution defined in this experiment, the \nystrom{} MMD estimator with a sampling size of 100 and sub-sampling size of 10 already delivers a comparable result to the empirical estimator with 100 samples (as seen in the acquisition plot of Figure~\ref{fig_nystrom_100-10}). Also, the inference time is reduced from 8.117 to 0.78 seconds by enabling parallel computation for more samples. For the cases that require much more samples for MMD estimation (\textit{e.g.}, the input distribution is quite complex or high-dimensional), this \nystrom{}-based acceleration can have a more pronounced impact.  

\begin{table}[]
\centering
\caption{Performance of Posterior inference for 512 samples.}
\label{tab_inference_time}
\begin{tabular}{@{}ccccc@{}}
\toprule
\textbf{Method}  & Sampling Size & Sub-sampling Size & \textbf{Inference Time (seconds)} & \textbf{Batch Size (samples)} \\ \midrule
Empirical & 20      & -     & 1.143 $\pm$ 0.083       & 512 \\
Empirical & 100     & -     & 8.117 $\pm$ 0.040       & 128 \\
Empirical & 1000    & -     & 840.715 $\pm$ 2.182     & 1   \\
Nystrom   & 100     & 10    & 0.780 $\pm$ 0.001       & 512 \\
Nystrom   & 1000    & 100   & 21.473 $\pm$ 0.984      & 128 \\ \bottomrule
\end{tabular}
\vspace{-1em}
\end{table}

\textbf{Effect of \nystrom{} estimator on optimization:} To investigate the effect of \nystrom{} estimator on optimization, we also perform an ablation study in Appendix~\ref{app_sec_ablation_for_nystrom}, the results in Figure~\ref{fig_ablation_test_for_nystrom} suggest that \nystrom{} estimator slightly degrades the optimization performance but greatly improves the inference efficiency.

\subsection{Robust Optimization}
\textbf{Experiment setup:} To experimentally validate \algo{}'s performance, we implement our algorithm~\footnote{The code will be available on \url{https://github.com/huawei-noah/HEBO}, and more implementation details can be found in Appendix~\ref{app_sec_implementation_details}.} based on BoTorch~\cite{balandat2020botorch} and employ a linear combination of multiple rational quadratic kernels~\cite{binkowskiDemystifyingMMDGANs2021} to compute the MMD as Eq.~\ref{eq_MMD_kernel}. We compare our algorithm with several baselines: 1) \textbf{uGP-UCB}~\cite{oliveira2019bayesian} is a closely related work that employs an integral kernel to model the various input distributions. It has a quadratic inference complexity of $O(MNm^2)$, where $M$ and $N$ are the sample numbers of the training and testing set, and $m$ indicates the sampling size of the integral kernel.  3)\textbf{GP-UCB} is the standard GP with UCB acquisition, which represents a broad range of existing methods that focus on non-robust optimization. 3) \textbf{SKL-UCB} employs symmetric Kullback-Leibler divergence to measure the distance between the uncertain inputs~\cite{moreno2003kullbackleiblera}. Its closed-form solution only exists if the input distributions are the Gaussians. 4) \textbf{ERBF-UCB} is the robust GP with the expected Radial Basis Function kernel proposed in~\cite{dallaireApproximateInferenceGaussian2011a}. It computes the expected kernel under input distribution using the Gaussian integrals. Assuming the input distributions are sub-Gaussians, this method can efficiently find the robust optimum. Since all the methods use UCB acquisition, we simply distinguish them by their surrogate names in the following tests. 

\begin{figure}
     \centering
     \begin{subfigure}[t]{0.24\textwidth}
         \centering
         \includegraphics[width=\textwidth]{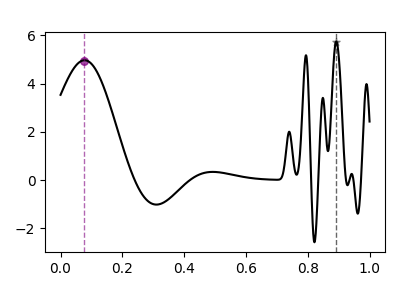}
         \caption{The RKHS function}
         \label{fig_benchmark_RKHS}
     \end{subfigure}
     \hfill
     \begin{subfigure}[t]{0.24\textwidth}
         \centering
         \includegraphics[width=\textwidth]{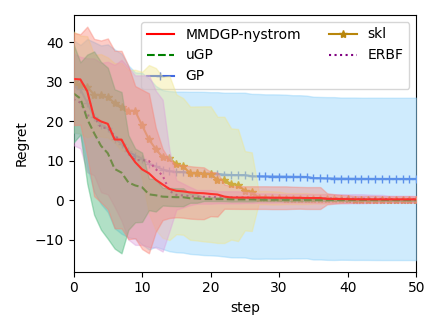}
         \caption{Robust regrets of RKHS function.}
         \label{fig_regret_RKHS_gaussian}
     \end{subfigure}
     \hfill
     \begin{subfigure}[t]{0.24\textwidth}
         \centering
         \includegraphics[width=\textwidth]{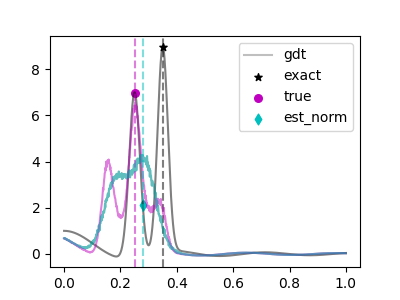}
         \caption{The double-peak function}
         \label{fig_benchmark_double_peak}
     \end{subfigure}
     \hfill
     \begin{subfigure}[t]{0.24\textwidth}
         \centering
         \includegraphics[width=\textwidth]{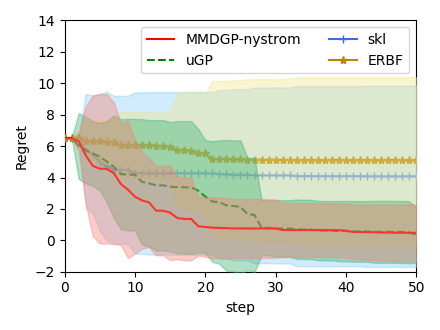}
         \caption{Robust regrets of double-peak function}
         \label{fig_regret_doublePeak_beta}
     \end{subfigure}
\caption{Robust optimization results on synthetic functions.}
\label{fig_opt_synthetic_function}
\vspace{-2em}
\end{figure}

At the end of the optimization, each algorithm needs to decide a final \textit{outcome} $x_n^r$, perceived to be the robust optimum under input uncertainty at step $n$. For a fair comparison, we employ the same outcome policy across all the algorithms: $x_n^r = \argmax_{x\in \mathcal{D}_n} \hat{\mu}_*(x) $, where $\hat{\mu}_*(x)$ is the posterior mean of robust surrogate at $x$ and $\mathcal{D}_n = \{ (x_i, f(x_i+\delta_i)) \vert \delta_i \sim P_{x_i}\}$ are the observations so far. The optimization performance is measured in terms of \textbf{\textit{robust regret}} as follows:
\begin{equation}
    r(x_n^r) = \mathbb{E}_{\delta\sim P_{x^*}} [f(x^*+\delta)] 
              - \mathbb{E}_{\delta \sim P_{x_n^r}} [f(x_n^r+\delta)],
\end{equation}
\noindent where $x^*$ is the global robust optimum and $x_n^r$ represents the outcome point at step $n$. For each algorithm, we repeat the optimization process 12 times and compare the average robust regret.

\textbf{1D RKHS function:} We begin the optimization evaluation with an RKHS function that is widely used in previous BO works~\cite{assael2014heteroscedastic, nogueiraUnscentedBayesianOptimization2016, christianson2023robust}. Figure~\ref{fig_benchmark_RKHS} shows its exact global optimum resides at $x=0.892$ while the robust optimum is around $x=0.08$ when the inputs follow a Gaussian distribution $\mathcal{N}(0, 0.01^2)$. According to Figure~\ref{fig_regret_RKHS_gaussian}, all the robust BO methods work well with Gaussian uncertain inputs and efficiently identify the robust optimum, but the GP-UCB stagnates at a local optimum due to its neglect of input uncertainty. Also, we notice the regret of our method decrease slightly slower than uGP works in this low-dimensional and Gaussian-input case, but later cases with higher dimension and more complex distribution show our method is more stable and efficient.

\textbf{1D double-peak function:} To test with more complex input uncertainty, we design a blackbox function with double peaks and set the input distribution to be a multi-modal distribution $P_x = beta(\alpha=0.4, \beta=0.2, \sigma=0.1)$. Figure~\ref{fig_benchmark_double_peak} shows the blackbox function (black solid line) and the corresponding function expectations estimated numerically via sampling from the input distribution (\textit{i.e.}, the colored lines). Note the true robust optimum is around $x=0.251$ under the beta distribution, but an erroneous location at $x=0.352$ may be determined if the input uncertainty is incorrectly presumed to be Gaussian. This explains the results in Figure~\ref{fig_regret_doublePeak_beta}: the performance of SKL-UCB and ERBF-UCB are sub-optimal due to their misidentification of inputs as Gaussian variables, while our method accurately quantifies the input uncertainty and outperforms the others.

\textbf{10D bumped-bowl function:} we also extend our evaluation to a 10D bumped-bowl function~\cite{sandersBayesianSearchRobust2021a} under a concatenated circular distribution. Figure~\ref{fig_opt_regret_10D_bumped_bowl} demonstrates \algo{} scales efficiently to high dimension and outperforms the others under complex input uncertainty(see Appendix~\ref{app_sec_10d_bumped_bowl}).

\begin{figure}
     \centering
     \begin{subfigure}[t]{0.33\textwidth}
         \vspace{0pt}
         \centering
         \includegraphics[width=\textwidth]{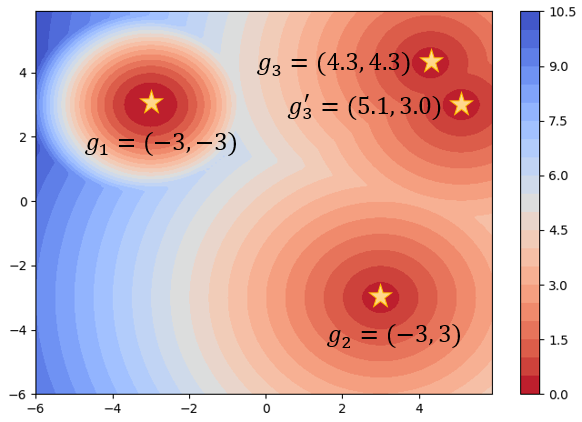}
         \caption{Contour of the robot push world}
         \label{fig_benchmark_tgp3}
     \end{subfigure}
     \hfill
     \begin{subfigure}[t]{0.66\textwidth}
         \vspace{0pt}
         \centering
         \includegraphics[width=\textwidth]{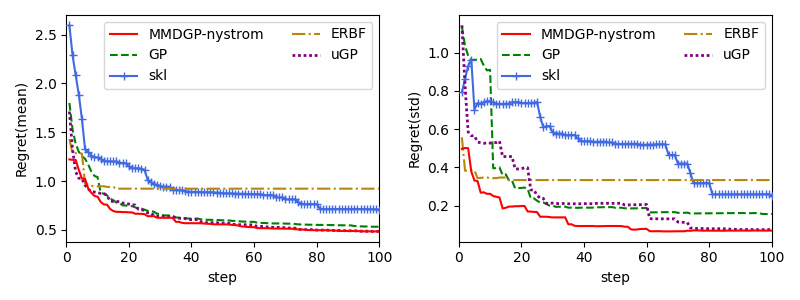}
         \caption{Robust regrets of different algorithms}
         \label{fig_regret_tgp3}
     \end{subfigure}
 \caption{Robust optimization of the robot push problem.}
\label{fig_opt_tgp3}
\vspace{-1em}
\end{figure}   

\begin{figure}
    \centering
    \includegraphics[width=1\textwidth]{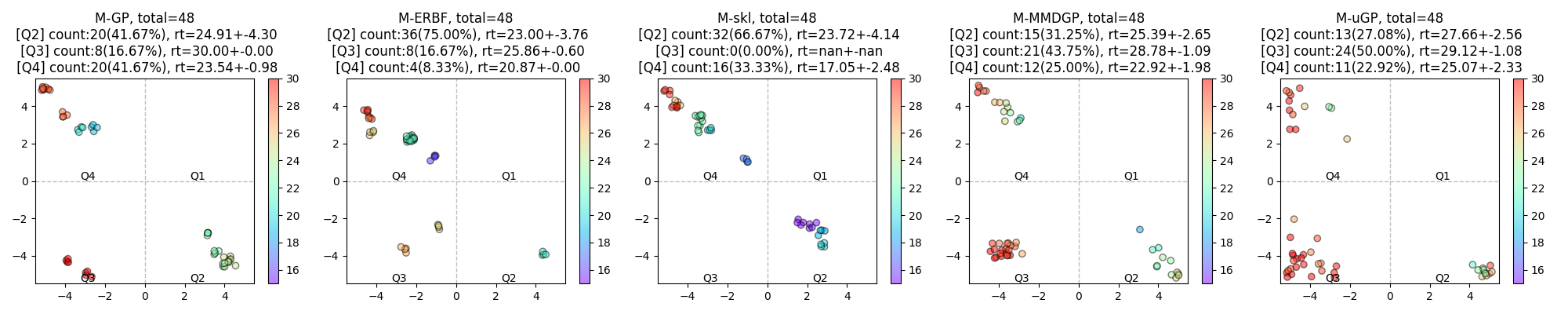}
    \caption{The robot's initial locations and push times found by different algorithms}
    \label{fig_push_results_tgp3}
    \vspace{-1em}
\end{figure}

\textbf{Robust robot pushing:} To evaluate \algo{} in a real-world problem, we employ a robust robot pushing benchmark from~\cite{wang2017max}, in which a ball is placed at the origin point of a 2D space and a robot learns to push it to a predefined target location $(g_x, g_y)$. This benchmark takes a 3-dimensional input $(r_x, r_y, r_t)$, where $r_x, r_y \in [-5, +5]$ are the 2D coordinates of the initial robot location and $r_t \in [0, 30] $ controls the push duration. We set four targets in separate quadrants, \textit{i.e.}, $g1=(-3, -3), g_2 = (-3, 3), g_3=(4.3, 4.3)$, and a ``twin'' target at $g_3^\prime = (5.1, 3.0) $, and describe the input uncertainty via a two-component Gaussian Mixture Model (defined in Appendix~\ref{app_sec_robot_pushing}).
Following~\cite{bogunovicAdversariallyRobustOptimization2018, christianson2023robust}, this blackbox benchmark outputs the minimum distance to these 4 targets under squared and linear distances: $loss = \min(d^2(g_1, l), d(g_2, l), d(g_3, l), d(g_3^\prime, l))$, where $d(g_i, l)$ is the Euclidean distance between the ball's ending location $l$ and the $i$-th target. This produces a loss landscape as shown in Figure~\ref{fig_benchmark_tgp3}. Note that $g_2$ is a more robust target than $g_1$ because of its linear-form distance while pushing the ball to quadrant I is the best choice as the targets, $g_3$ and $g_3^\prime$, match the dual-mode pattern of the input uncertainty. According to Figure~\ref{fig_regret_tgp3}, our method obviously outperforms the others because it efficiently quantifies the multimodal input uncertainty. This can be further evidenced by the push configurations found by different algorithms in Figure~\ref{fig_push_results_tgp3}, in which each dot represents the robot's initial location and its color represents the push duration. We find that \algo{} successfully recognizes the targets in quadrant I as an optimal choice and frequently pushes from quadrant III to quadrant I. Moreover, the pushes started close to the origin can easily go far away under input variation, so our method learns to push the ball from a corner with a long push duration, which is more robust in this case. 


\section{Discussion and Conclusion}\label{sec_conclusion}
In this work, we generalize robust Bayesian Optimization to an uncertain input setting. The weight-space interpretation of GP inspires us to empower the GP kernel with MMD and build a robust surrogate for uncertain inputs of arbitrary distributions. We also employ the \nystrom{} approximation to boost the posterior inference and provide theoretical regret bound under approximation error. The experiments on synthetic blackbox function and benchmarks demonstrate our method can handle various input uncertainty and achieve state-of-the-art optimization performance. 

There are several interesting directions that worth to explore: though we come to current MMD-based kernel from the weight-space interpretation of GP and the RKHS realization of MMD, our kernel design exhibits a deep connection with existing works on kernel over probability measures~\cite{Muandet2012LearningFD, Christmann2010UniversalKO}. Along this direction, as our theoretic regret analysis in Section~\ref{sec_theoretical_analysis} does not assume any particular form of kernel and the \nystrom{} acceleration can also be extended to the other kernel computation, it is possible that \algo{} can be further generalized to a more rich family of kernels. Moreover, the MMD used in our kernel is by no means limited to its RKHS realization. In fact, any function class $\mathcal{F}$ that comes with uniform convergence guarantees and is sufficiently rich can be used, which renders different realizations of MMD. With proper choice of function class $\mathcal{F}$, MMD can be expressed as the Kolmogorov metric or other Earth-mover distances~\cite{gretton2012kernel}. It is also interesting to extend \algo{} with the other IPMs.


\section{Acknowledgements}\label{sec_ack}
We sincerely thank Yanbin Zhu and Ke Ma for their help on formulating the problem. Also, a heartfelt appreciation goes to Lu Kang for her constant encouragement and support throughout this work.

\printbibliography{}

\newpage
\appendix

\section{\Nystrom{} Estimator Error Bound}
\Nystrom{} estimator can easily approximate the kernel mean embedding $\psi_{p_1}, \psi_{p_2}$ as well as the MMD distance between two distribution density $p_1$ and $p_2$. We need first assume the boundedness of the feature map to the kernel $k$:

\begin{asm}\label{characteristicbound}
There exists a positive constant $K \le \infty$ such that $\sup_{x\in \mathcal{X}}\Vert \phi(x)\Vert \le K$
\end{asm}

The true MMD distance between $p_1$ and $p_2$ is denoted as $\text{MMD}(p_1,p_2)$. The estimated MMD distance when using a \nystrom{} sample size $m_i$, sub-sample size $h_i$ for $p_i$ respectively, is denoted as $\text{MMD}_{(p_i,m_i,h_i)}$. Then the error

$$ \text{Err}_{(p_i,m_i,h_i)} :=\vert \text{MMD}(p_1,p_2) - \text{MMD}_{(p_i,m_i,h_i)} \vert$$

and now we have the lemma from Theorem 5.1 in \cite{Chatalic2022NystrmKM}

\begin{lem} \label{nystromerr}
Let Assumption \ref{characteristicbound} hold. Furthermore, assume that for $i \in {1,2}$, the data points $X_1^i, \cdots, X_{m_i}^i$ are drawn i.i.d. from the distribution $\rho_i$ and that $h_i \le m_i$ sub-samples $\tilde X_1^i, \cdots,\tilde X_{h_i}^i$ are drawn uniformly with replacement from the dataset $\{ X_1^i, \cdots, X_{m_i}^i \}.$ Then, for any $\delta \in (0,1)$, it holds with probability at least $1-2\delta$

$$ \text{Err}_{(p_i,m_i,h_i)} \le \sum_{i=1,2} \left( \frac{c_1}{\sqrt{m_i}} + \frac{c_2}{h_i} + \frac{\sqrt{\log(h_i/\delta)}}{h_i} \sqrt{ \mathcal{N}^{p_i} (\frac{12K^2\log(h_i/\delta)}{h_i})  } \right),$$
provided that, for $i \in \{1,2\}$, 
$$ h_i\ge \max(67,12K^2 \Vert C_i\Vert^{-1}_{\mathcal{L}(\mathcal{H})})\log(h_i/\delta) $$

where $c_1 = 2K\sqrt{2\log(6/\delta)}, c_2 = 4\sqrt{3}K\log(12/\delta)$ and $c_4 = 6K\sqrt{\log(12/\delta)}$. The notation $\mathcal{N}^{p_i}$ denotes the effective dimension associated to the distribution $p_k$.

Specifically, when the effective dimension $\mathcal{N}$ satisfies, for some $c \ge 0$,
\begin{itemize}
    \item either $\mathcal{N}^{\rho_i}(\sigma^2) \le c \sigma^{2^{-\gamma}}$ for some $\gamma \in (0,1)$,
    \item or $\mathcal{N}^{\rho_i}(\sigma^2) \le \log(1+c/\sigma^2)/\beta,$ for some $\beta > 0$.
\end{itemize}
Then, choosing the subsample size $m$ to be
\begin{itemize}
    \item $h_i = m_i^{1/(2-\gamma)}\log(m_i/\delta)$ in the first case
    \item or $h_i = \sqrt{m_i} \log(\sqrt{m_i} \max(1/\delta, c/(6K^2))$ in the second case,
\end{itemize}
we get $\text{Err}_{(\rho_i, m_i,h_i)} = O(1/\sqrt{m_i})$
\end{lem}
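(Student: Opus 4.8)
The plan is to reduce the \nystrom{} MMD approximation error to a sum of two kernel-mean-embedding errors, one for each input distribution, and then invoke the cited per-embedding guarantee (Theorem 5.1 of \cite{Chatalic2022NystrmKM}) term by term. First I would write the true distance as $\text{MMD}(p_1,p_2) = \Vert \psi_{p_1} - \psi_{p_2}\Vert_{\mathcal{H}}$ and the estimate as $\text{MMD}_{(p_i,m_i,h_i)} = \Vert \tilde\psi_{p_1} - \tilde\psi_{p_2}\Vert_{\mathcal{H}}$, where $\tilde\psi_{p_i}$ denotes the \nystrom{} approximation of the mean embedding $\psi_{p_i}$ built from the $m_i$ i.i.d.\ samples and their $h_i$ subsamples. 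By the reverse triangle inequality,
\begin{equation*}
\text{Err}_{(p_i,m_i,h_i)} = \left\vert\, \Vert \psi_{p_1} - \psi_{p_2}\Vert - \Vert \tilde\psi_{p_1} - \tilde\psi_{p_2}\Vert \,\right\vert \le \Vert \psi_{p_1} - \tilde\psi_{p_1}\Vert + \Vert \psi_{p_2} - \tilde\psi_{p_2}\Vert,
\end{equation*}
so it suffices to control each embedding error $\Vert \psi_{p_i} - \tilde\psi_{p_i}\Vert$ separately.

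Next I would apply Theorem 5.1 of \cite{Chatalic2022NystrmKM} to each term $\Vert \psi_{p_i} - \tilde\psi_{p_i}\Vert$. Assumption \ref{characteristicbound} supplies the feature-map bound $K$ required there, and the stated lower bound $h_i \ge \max(67, 12K^2\Vert C_i\Vert^{-1}_{\mathcal{L}(\mathcal{H})})\log(h_i/\delta)$ is precisely the admissibility condition on the subsample size, ensuring that the \nystrom{} subspace captures enough of the range of the (uncentered) covariance operator $C_i$ of $\rho_i$. The cited theorem then gives, with probability at least $1-\delta$ for each fixed $i$,
\begin{equation*}
\Vert \psi_{p_i} - \tilde\psi_{p_i}\Vert \le \frac{c_1}{\sqrt{m_i}} + \frac{c_2}{h_i} + \frac{\sqrt{\log(h_i/\delta)}}{h_i}\sqrt{\mathcal{N}^{p_i}\!\left(\frac{12K^2\log(h_i/\delta)}{h_i}\right)},
\end{equation*}
in which the first term is the statistical error of estimating $\psi_{p_i}$ from $m_i$ samples, while the last two are the subsampling error controlled by the effective dimension $\mathcal{N}^{p_i}$. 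Summing the two per-distribution bounds and taking a union bound over $i\in\{1,2\}$ yields the first displayed inequality of the lemma with probability at least $1-2\delta$.

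For the asymptotic rate I would substitute each capacity assumption on $\mathcal{N}^{p_i}$ together with the prescribed $h_i$. In the polynomial-decay case the choice $h_i = m_i^{1/(2-\gamma)}\log(m_i/\delta)$ is made exactly so that the subsampling term $h_i^{-1}\sqrt{\mathcal{N}^{p_i}(12K^2\log(h_i/\delta)/h_i)}$ balances against $m_i^{-1/2}$; inserting the decay $\mathcal{N}^{p_i}(\lambda)=O(\lambda^{-\gamma})$ and simplifying shows this term, as well as $c_2/h_i$, is $O(m_i^{-1/2})$ up to logarithmic factors, so the statistical term $c_1/\sqrt{m_i}$ dominates and $\text{Err}_{(\rho_i,m_i,h_i)}=O(1/\sqrt{m_i})$. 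The logarithmic-decay case is identical with $h_i=\sqrt{m_i}\log(\sqrt{m_i}\max(1/\delta,c/(6K^2)))$, where $\mathcal{N}^{p_i}$ is only logarithmic and the balance is even more immediate.

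I expect the main obstacle to be bookkeeping rather than conceptual. The genuine analytic content---the Hilbert-space concentration for the empirical embedding and the projection-error control for the \nystrom{} subspace that together produce the three-term bound---is imported directly from Theorem 5.1 of \cite{Chatalic2022NystrmKM}, so the work here is (i) justifying the reduction above and checking that our objects ($C_i$, $\mathcal{N}^{p_i}$, $K$) match the hypotheses of that theorem, and (ii) carrying out the algebra that verifies the prescribed $h_i$ makes the subsampling terms decay at the target $m_i^{-1/2}$ rate. The point needing the most care is confirming that the admissibility condition $h_i \ge \max(67,12K^2\Vert C_i\Vert^{-1}_{\mathcal{L}(\mathcal{H})})\log(h_i/\delta)$ stays compatible with the prescribed growth of $h_i$ for large $m_i$, which holds because its right-hand side grows only like $\log h_i$ while $h_i$ grows polynomially in $m_i$.
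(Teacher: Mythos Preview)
Your proposal is correct. The paper does not actually prove this lemma at all: it simply states it as a restatement of Theorem~5.1 in \cite{Chatalic2022NystrmKM} (``and now we have the lemma from Theorem 5.1 in \cite{Chatalic2022NystrmKM}''), with no argument given. Your sketch---reverse triangle inequality to split the MMD error into two embedding errors, per-distribution high-probability bound, union bound, then the balancing calculation for $h_i$---is precisely the structure of the argument in \cite{Chatalic2022NystrmKM} that produces the cited theorem, so you are effectively reconstructing the proof the paper outsources. One small bookkeeping point: Theorem~5.1 of \cite{Chatalic2022NystrmKM} is itself the two-distribution MMD bound, so when you write ``apply Theorem~5.1 to each term $\Vert\psi_{p_i}-\tilde\psi_{p_i}\Vert$'' you are really invoking the single-embedding concentration result that \emph{underlies} Theorem~5.1 (their Proposition on the \nystrom{} KME error), not Theorem~5.1 directly; adjusting the citation is all that is needed.
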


\section{Proofs of Section \ref{sec_theoretical_analysis}}

\subsection{Exact Kernel Uncertainty GP Formulating}
Following the same notation in Section \ref{sec_theoretical_analysis}, now we can construct a Gaussian process $\mathcal{GP}(0,\hat k)$ modeling over $\mathcal{P}$. This $\mathcal{GP}$ model can then be applied to learn $\hat f$ from a given set of observations $\mathcal{D}_n = \{(P_i,y_i)\}_{i=1}^n.$ Under zero mean condition, the value of $\hat f(P_{\ast}
)$ for a given $P_{\ast} \in \mathcal{P}$ follows a Gaussian posterior distribution with

\begin{align}
\hat \mu_n(P_\ast) &= \hat {\bf k}_n(P_\ast)^T(\hat {\bf K}_n + \sigma^2 {\bf I})^{-1} {\bf y}_n\label{exact_mean}\\
\hat\sigma_n^2(P_\ast) &= \hat k(P_\ast,P_\ast) - \hat {\bf k}_n(P_\ast)^T(\hat {\bf K}_n + \sigma^2 {\bf I})^{-1}\hat {\bf k}_n(P_\ast),\label{exact_variance}
\end{align}

where ${\bf y}_n := [y_1,\cdots,y_n]^T$, $\hat {\bf k}_n(P_\ast):=[\hat k(P_\ast,P_1),\cdots,\hat k(P_\ast,P_n )]^T$ and $[\hat {\bf K}_n]_{ij} = \hat k(P_i,P_j)$.


Now we restrict our Gaussian process in the subspace $\mathcal{P}_\mathcal{X} = \{ P_x, x\in\mathcal{X} \}\subset \mathcal{P}$. We assume the observation $y_i = f(x_i) + \zeta_i$ with the noise $\zeta_i$. The input-induced noise is defined as $\Delta f_{p_{x_i}} := f(x_i) - \mathbb{E}_{P_{x_i}}[f] = f(x_i) - \hat f(P_{x_i})$. Then the total noise is $y_i - \mathbb{E}_{P_{x_i}}[f] = \zeta_i + \Delta f_{p_{x_i}}$. To formulate the regret bounds, we introduce the information gain and estimated information gain given any $\{P_t\}_{t=1}^n \subset \mathcal{P}$:
\begin{align}\hat I({\bf y}_n; \hat{\bf f}_n\vert \{P_t\}_{t=1}^n) := \frac{1}{2}\ln \det( {\bf I} + \sigma^{-2} \hat{\bf K}_n) ,\label{exact_information}\\
\tilde I({\bf y}_n; \hat{\bf f}_n\vert \{P_t\}_{t=1}^n) := \frac{1}{2}\ln \det( {\bf I} + \sigma^{-2} \tilde{\bf K}_n) ,\label{estimate_information}\end{align}
and the maximum information gain is defined as $\hat\gamma_n := \sup_{\mathcal{R} \in \mathcal{P}_\mathcal{X};\vert \mathcal{R}\vert = n} \hat I({\bf y}_n; \hat{\bf f}_n\vert \mathcal{R}). $ Here $\hat {\bf f}_n := [\hat f(p_1),\cdots,\hat f(p_n)]^T$.

We define the sub-Gaussian condition as follows:
\begin{defi}\label{subgaussiandef}
For a given $\sigma_\xi > 0$, a real-valued random variable $\xi$ is said to be $\sigma_\xi$-sub-Gaussian if:
\begin{equation}
    \forall \lambda \in \mathbb{R}, \mathbb{E}[e^{\lambda \xi}] \le e^{\lambda^2 \sigma_\xi^2/2} \nonumber
\end{equation}
\end{defi}

Now we can state the lemma for bounding the uncertain-inputs regret of exact kernel evaluations, which is originally stated in Theorem 5 in \cite{oliveira2019bayesian}.

\begin{lem}\label{exactkernel_regretbound}
Let $\delta \in (0,1), f\in \mathcal{H}_k,$ and the corresponding $ \Vert \hat f \Vert_{\hat k}  \le b$. Suppose the observation noise $\zeta_i = y_i -f(x_i) $ is conditionally $\sigma_\zeta$-sub-Gaussian. Assume that both $k$ and $P_x$ satisfy the conditions for $\Delta f_{P_x}$ to be $\sigma_E$-sub-Gaussian, for a given $\sigma_E > 0$. Then, we have the following results:
\begin{itemize}
    \item The following holds for all $x \in \mathcal{X}$ and $t \ge 1$:
    \begin{equation}\label{exact_bias}
        \vert \hat\mu_{n}(P_x) - \hat f(P_x) \vert \le \left(b +\sqrt{\sigma_E^2 + \sigma_\zeta^2} \sqrt{2\left(\hat I({\bf y}_n; \hat{\bf f}_n\vert \{P_t\}_{t=1}^n) + 1 + \ln(1/\delta)\right)}   \right) \hat \sigma_n(P_x)
    \end{equation}
    \item Running with upper confidence bound (UCB) acquisition function
$\alpha(x|\mathcal{D}_n) = \hat \mu_n(P_x) + \hat \beta_n \hat \sigma_n(P_x)$
where
\begin{equation*}\hat\beta_n = b +\sqrt{\sigma_E^2 + \sigma_\zeta^2} \sqrt{2\left(\hat I({\bf y}_n; \hat{\bf f}_n\vert \{P_t\}_{t=1}^n) + 1 + \ln(1/\delta)\right)}, \end{equation*}
 and set $\sigma^2 = 1+2/n$, the uncertain-inputs cumulative regret satisfies:
\begin{equation}\hat R_n \in O(\sqrt{n\hat\gamma_n}(b + \sqrt{\hat\gamma_n + \ln(1/\delta) })) \label{vanilla_regret}\end{equation}
with probability at least $1-\delta$.
\end{itemize}
\end{lem}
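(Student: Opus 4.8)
The plan is to recognize Lemma~\ref{exactkernel_regretbound} as an instance of the kernelized UCB (IGP-UCB) analysis, carried out in the RKHS $\mathcal{H}_{\hat k}$ over the space of measures rather than in the base RKHS $\mathcal{H}_k$. Since the preceding discussion establishes $\hat f \in \mathcal{H}_{\hat k}$ with $\Vert \hat f\Vert_{\hat k}\le b$, we treat $\hat f$ as the unknown target and view the pairs $(P_{x_i},y_i)$ as noisy observations $y_i = \hat f(P_{x_i}) + \epsilon_i$ of it, with effective noise $\epsilon_i := y_i - \hat f(P_{x_i})$. The whole argument then reduces to two standard pieces: a self-normalized confidence bound of the form \eqref{exact_bias}, and the usual UCB-to-regret conversion. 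The only genuine departure from the vanilla theorem is that $\epsilon_i$ is not the measurement noise alone, so the first task is to control its tail.

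First I would verify the noise decomposition and its sub-Gaussianity. By definition $\epsilon_i = y_i - \hat f(P_{x_i}) = (f(x_i)+\zeta_i) - \mathbb{E}_{P_{x_i}}[f] = \zeta_i + \Delta f_{P_{x_i}}$, exactly the total noise introduced in the statement. Working with the natural filtration $\mathcal{F}_{i-1}$ that fixes the query distribution $P_{x_i}$ and the first $i-1$ rounds, both $\zeta_i$ (measurement) and $\Delta f_{P_{x_i}}$ (fresh input sampling) are conditionally centered and independent, so their conditional moment generating functions multiply: $\mathbb{E}[e^{\lambda\epsilon_i}\mid\mathcal{F}_{i-1}] = \mathbb{E}[e^{\lambda\zeta_i}]\,\mathbb{E}[e^{\lambda\Delta f_{P_{x_i}}}] \le e^{\lambda^2\sigma_\zeta^2/2}e^{\lambda^2\sigma_E^2/2} = e^{\lambda^2(\sigma_\zeta^2+\sigma_E^2)/2}$. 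Hence $\epsilon_i$ is conditionally $R$-sub-Gaussian with $R=\sqrt{\sigma_\zeta^2+\sigma_E^2}$, precisely the noise proxy appearing in $\hat\beta_n$.

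Next I would establish the confidence bound by the classical decomposition of the posterior error. Writing $\hat{\bf f}_n=[\hat f(P_1),\dots,\hat f(P_n)]^T$ and letting ${\bf e}_n := {\bf y}_n - \hat{\bf f}_n$ be the noise vector with entries $\epsilon_i$, one splits $\hat\mu_n(P_x)-\hat f(P_x)$ into a deterministic bias term $\hat f(P_x) - \hat{\bf k}_n(P_x)^T(\hat{\bf K}_n+\sigma^2{\bf I})^{-1}\hat{\bf f}_n$ and a stochastic term $\hat{\bf k}_n(P_x)^T(\hat{\bf K}_n+\sigma^2{\bf I})^{-1}{\bf e}_n$. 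Using the reproducing property in $\mathcal{H}_{\hat k}$ together with $\Vert\hat f\Vert_{\hat k}\le b$, the bias is bounded by $b\,\hat\sigma_n(P_x)$; after a Cauchy--Schwarz step the stochastic term is bounded by $\Vert{\bf e}_n\Vert_{(\hat{\bf K}_n+\sigma^2{\bf I})^{-1}}\,\hat\sigma_n(P_x)$, and the self-normalized concentration (method of mixtures, of Abbasi-Yadkori type) for $R$-sub-Gaussian noise gives $\Vert{\bf e}_n\Vert_{(\hat{\bf K}_n+\sigma^2{\bf I})^{-1}}\le R\sqrt{2(\hat I({\bf y}_n;\hat{\bf f}_n\mid\{P_t\})+1+\ln(1/\delta))}$ uniformly in $n$ with probability at least $1-\delta$, where the log-determinant is exactly the information gain \eqref{exact_information}. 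Adding the two pieces yields \eqref{exact_bias} and identifies $\hat\beta_n$.

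Finally the regret bound follows the standard route. By the UCB rule and \eqref{exact_bias}, the instantaneous regret satisfies $\hat r_t = \hat f(P_{x^*})-\hat f(P_{x_t})\le 2\hat\beta_t\,\hat\sigma_{t-1}(P_{x_t})$; summing and applying Cauchy--Schwarz gives $\hat R_n \le 2\hat\beta_n\sqrt{n\sum_{t=1}^n\hat\sigma_{t-1}^2(P_{x_t})}$. The choice $\sigma^2 = 1+2/n$ makes $\sum_t\hat\sigma_{t-1}^2(P_{x_t})\le C\hat\gamma_n$ through the telescoping log-determinant identity relating cumulative predictive variance to information gain, and the monotonicity $\hat I\le\hat\gamma_n$ turns $\hat\beta_n$ into $O(b+\sqrt{\hat\gamma_n+\ln(1/\delta)})$, delivering \eqref{vanilla_regret}. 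I expect the main obstacle to be the self-normalized concentration step in the infinite-dimensional $\mathcal{H}_{\hat k}$: one must set up the martingale $\sum_i\epsilon_i\,\hat k(\cdot,P_{x_i})$ adapted to $\mathcal{F}_{i-1}$, justify the method-of-mixtures supermartingale in this function space, and obtain the data-dependent $\hat I$ term (rather than a worst-case $\hat\gamma_n$) inside the confidence width, all while the combined noise proxy $R$ replaces the usual measurement-noise constant.
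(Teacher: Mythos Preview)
Your proposal is correct and is precisely the IGP-UCB argument (Chowdhury--Gopalan) transported to the RKHS $\mathcal{H}_{\hat k}$, with the combined noise $\epsilon_i=\zeta_i+\Delta f_{P_{x_i}}$ shown to be $\sqrt{\sigma_\zeta^2+\sigma_E^2}$-sub-Gaussian. The paper does not give its own proof of this lemma at all: it simply cites Theorem~5 of \cite{oliveira2019bayesian} and remarks that the argument extends from the linear kernel $\langle\psi_P,\psi_Q\rangle_k$ to general universal kernels $\hat k$ over $\mathcal{P}$, so your sketch is in fact more detailed than what the paper provides and matches the route taken in the cited reference.
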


 Note that although the original theorem restricted to the case when $\hat k(p,q) = \langle \psi_P, \psi_Q \rangle_k$, the results can be easily generated to other kernels over $\mathcal{P}$, as long as its universal w.r.t $C(\mathcal{P})$ given that $\mathcal{X}$ is compact and the mean map $\psi$ is injective \cite{Christmann2010UniversalKO,Muandet2012LearningFD}.

\subsection{Error Estimates for Inexact Kernel Approximation}
Now let us derivative the inference under the inexact kernel estimations.

\begin{thm} \label{Statistics Error}
    Under the Assumption \ref{assumption_estimation_bound} for $\varepsilon>0$, let $\tilde \mu_n, \tilde \sigma_n, \tilde I({\bf y}_n; \hat{\bf f}_n\vert \{P_t\}_{t=1}^n)$ as defined in \eqref{estimate_mean},\eqref{estimate_variance},\eqref{estimate_information} respectively, and $\hat \mu_n, \hat \sigma_n, \hat I({\bf y}_n; \hat{\bf f}_n\vert \{P_t\}_{t=1}^n)$ as defined in \eqref{exact_mean},\eqref{exact_variance},\eqref{exact_information}. Assume $\max_{x\in \mathcal{X}} f(x) = M$, 
   and assume the observation error $\zeta_i = y_i - f(x_i)$ satisfies $|\zeta_i| < A$ for all $i$. Then we have the following error bound holds with probability at least $1-n\varepsilon$:
\begin{align} \vert\hat \mu_n(P_\ast) - \tilde \mu_n(P_\ast)\vert &< (\frac{n}{\sigma^2} + \frac{n^2}{\sigma^4})(M+A)e_\varepsilon + O(e_\varepsilon^2) \label{Mean_Error} \\
    \vert\hat \sigma_n^2(P_\ast) - \tilde \sigma_n^2(P_\ast)\vert  &< (1 + \frac{n}{\sigma^2})^2 e_\varepsilon+O(e_\varepsilon^2) \label{Variance_Error}\\
    \left\vert\tilde I({\bf y}_n; \hat{\bf f}_n\vert \{P_t\}_{t=1}^n) - \hat I({\bf y}_n; \hat{\bf f}_n\vert \{P_t\}_{t=1}^n) \right\vert&< \frac{n^{3/2}}{2\sigma^2} e_\varepsilon +O(e_\varepsilon^2) \label{Information_Error}
\end{align}
\end{thm}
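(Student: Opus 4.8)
The plan is a first-order matrix-perturbation argument: condition on the event that every kernel evaluation entering the two posteriors is accurate to within $e_\varepsilon$, treat the inexact Gram data as a small additive perturbation of the exact one, and Taylor-expand the posterior formulas \eqref{estimate_mean}--\eqref{estimate_information} in that perturbation, keeping only the linear term.

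First I would fix notation and the good event. Write $\hat A_n := \hat{\bf K}_n + \sigma^2 {\bf I}$, $\tilde A_n := \tilde{\bf K}_n + \sigma^2 {\bf I}$, $E := \hat{\bf K}_n - \tilde{\bf K}_n = \hat A_n - \tilde A_n$, $\boldsymbol\eta := \hat{\bf k}_n(P_\ast) - \tilde{\bf k}_n(P_\ast)$, and $\epsilon_\ast := \hat k(P_\ast,P_\ast) - \tilde k(P_\ast,P_\ast)$. Applying Assumption~\ref{assumption_estimation_bound} to the kernel evaluations appearing in $\hat{\bf K}_n$, $\hat{\bf k}_n(P_\ast)$ and $\hat k(P_\ast,P_\ast)$ and taking a union bound, the event
\[
\|E\|_{\max}\le e_\varepsilon,\qquad \|\boldsymbol\eta\|_\infty\le e_\varepsilon,\qquad |\epsilon_\ast|\le e_\varepsilon
\]
holds with probability at least $1-n\varepsilon$; attaining exactly $n\varepsilon$ (rather than the naive $\binom{n+1}{2}\varepsilon$) requires attributing the pairwise MMD errors to the $n$ underlying per-distribution sample sets --- equivalently, in the sequential algorithm only one new input distribution enters at each round --- and I would spell this out. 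On this event $\|E\|_2\le\|E\|_F\le n e_\varepsilon$ and $\|\boldsymbol\eta\|_2\le\sqrt n\,e_\varepsilon$. I would also record the a priori bounds that make the perturbation genuinely small: $\hat{\bf K}_n\succeq 0$ yields $\|\hat A_n^{-1}\|_2\le\sigma^{-2}$ and $\|({\bf I}+\sigma^{-2}\hat{\bf K}_n)^{-1}\|_2\le 1$; the MMD kernel satisfies $0<\hat k(\cdot,\cdot)\le 1$ so $\|\hat{\bf k}_n(P_\ast)\|_2\le\sqrt n$; and $|y_i|\le|f(x_i)|+|\zeta_i|\le M+A$ gives $\|{\bf y}_n\|_2\le\sqrt n\,(M+A)$.

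Then I would treat the three quantities in turn, always via the Neumann expansion $\tilde A_n^{-1} = \hat A_n^{-1} + \hat A_n^{-1}E\hat A_n^{-1} + O(e_\varepsilon^2)$, legitimate once $n e_\varepsilon<\sigma^2$. For the \emph{mean}, substituting this into $\tilde\mu_n(P_\ast)=\tilde{\bf k}_n(P_\ast)^T\tilde A_n^{-1}{\bf y}_n$ and subtracting $\hat\mu_n$ leaves the two first-order terms $\hat{\bf k}_n(P_\ast)^T\hat A_n^{-1}E\hat A_n^{-1}{\bf y}_n$ and $-\boldsymbol\eta^T\hat A_n^{-1}{\bf y}_n$, which Cauchy--Schwarz and submultiplicativity of $\|\cdot\|_2$ bound by $\tfrac{n^2}{\sigma^4}(M+A)e_\varepsilon$ and $\tfrac{n}{\sigma^2}(M+A)e_\varepsilon$ --- this is \eqref{Mean_Error}. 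For the \emph{variance}, the $\hat k(P_\ast,P_\ast)$ term contributes $|\epsilon_\ast|\le e_\varepsilon$, and expanding $\tilde{\bf k}_n(P_\ast)^T\tilde A_n^{-1}\tilde{\bf k}_n(P_\ast)$ to first order produces $\hat{\bf k}_n(P_\ast)^T\hat A_n^{-1}E\hat A_n^{-1}\hat{\bf k}_n(P_\ast)$ and $-2\boldsymbol\eta^T\hat A_n^{-1}\hat{\bf k}_n(P_\ast)$, bounded by $\tfrac{n^2}{\sigma^4}e_\varepsilon$ and $\tfrac{2n}{\sigma^2}e_\varepsilon$; summing the three terms gives $\bigl(1+\tfrac{n}{\sigma^2}\bigr)^2 e_\varepsilon+O(e_\varepsilon^2)$, i.e.\ \eqref{Variance_Error}. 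For the \emph{information gain}, write $\tilde I-\hat I=\tfrac12\ln\det\!\bigl({\bf I}-\sigma^{-2}B^{-1}E\bigr)$ with $B:={\bf I}+\sigma^{-2}\hat{\bf K}_n$, apply $\ln\det({\bf I}+X)=\operatorname{tr}X+O(\|X\|^2)$, and bound $|\operatorname{tr}(B^{-1}E)|\le\|B^{-1}\|_F\|E\|_F\le\sqrt n\cdot n e_\varepsilon$ (trace-inner-product Cauchy--Schwarz, using $\|B^{-1}\|_F\le\sqrt n\|B^{-1}\|_2\le\sqrt n$), which gives $\tfrac{n^{3/2}}{2\sigma^2}e_\varepsilon+O(e_\varepsilon^2)$, i.e.\ \eqref{Information_Error}.

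The computations are routine Cauchy--Schwarz estimates; the two points that actually need care are (i) the union-bound bookkeeping that delivers $1-n\varepsilon$ instead of a factor quadratic in $n$, and (ii) making the $O(e_\varepsilon^2)$ remainders rigorous --- the Neumann series and the $\ln\det$ expansion both need $e_\varepsilon=o(1/n)$ to converge, which is harmless here since the theorem is asserted only up to $O(e_\varepsilon^2)$ and is invoked downstream with $e_\varepsilon\to 0$. I expect (i) to be the main obstacle, since it is the only step that is not a mechanical norm inequality and it is what couples the approximation scheme (Lemma~\ref{nystromerr}) to the stated failure probability.
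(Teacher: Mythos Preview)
Your proposal is correct and follows essentially the same route as the paper: the same Neumann/first-order perturbation of $(\hat{\bf K}_n+\sigma^2{\bf I})^{-1}$, the same two first-order terms for $\mu$ and $\sigma^2$ bounded via $\|\hat A_n^{-1}\|_2\le\sigma^{-2}$, $\|\hat{\bf k}_n\|_2\le\sqrt n$, $\|{\bf y}_n\|_2\le\sqrt n(M+A)$, and the same $\ln\det=\operatorname{tr}\log$ expansion with $|\operatorname{tr}(B^{-1}E)|\le\|B^{-1}\|_F\|E\|_F$ for the information gain. You are in fact more careful than the paper on point (i): the paper simply asserts ``with probability at least $1-n\varepsilon$'' without explaining how a union bound over $O(n^2)$ pairwise kernel evaluations collapses to $n$ events, whereas you correctly flag that this relies on the per-distribution sample sets being the underlying random objects.
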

\begin{proof}
Denote $e(P_\ast,Q) = \tilde k(P_\ast,Q) - \hat k(P_\ast,Q)$, ${\bf e}_n(P_\ast) = [e(P_\ast,P_1),\cdots,e(P_\ast.P_n)]^T$, and $[{\bf E}_n]_{i,j} = e(P_i, P_j)$. Now according to the matrix inverse perturbation expansion,
$$(X + \delta X)^{-1} = X^{-1} - X^{-1} \delta X X^{-1} + O(\Vert \delta X\Vert^2),$$
we have
$$(\hat {\bf K}_n + \sigma^2 {\bf I} + {\bf E}_n)^{-1} =(\hat {\bf K}_n + \sigma^2 {\bf I})^{-1} - (\hat {\bf K}_n + \sigma^2 {\bf I})^{-1} {\bf E}_n (\hat {\bf K}_n + \sigma^2 {\bf I})^{-1} + O(\Vert {\bf E}_n\Vert^2),$$
thus
\begin{align*}
\tilde\mu_n(P_\ast) =& (\hat {\bf k}_n(P_\ast) + {\bf e}_n(P_\ast))^T (\hat {\bf K}_n + \sigma^2 {\bf I} + {\bf E}_n)^{-1} {\bf y}_n \\
=& \hat\mu_n(P_\ast) + {\bf e}_n(P_\ast)^T(\hat {\bf K}_n + \sigma^2 {\bf I} )^{-1} {\bf y}_n -\hat {\bf k}_n(P_\ast) ^T (\hat {\bf K}_n + \sigma^2 {\bf I} )^{-1} {\bf E}_n (\hat{\bf K}_n + \sigma^2 {\bf I} )^{-1} {\bf y}_n \\
&+ O(\Vert{\bf E}_n\Vert^2) + O(\Vert{\bf e}_n(P_\ast))\Vert\cdot\Vert{\bf E}_n\Vert)\\
\tilde \sigma_n^2 (P_\ast) =&  \hat \sigma_n^2 (P_\ast) + e(P_\ast,P_\ast) -  (\hat {\bf k}_n(P_\ast) + {\bf e}_n(P_\ast))^T(\hat {\bf K}_n + \sigma^2 {\bf I} + {\bf E}_n)^{-1}(\hat {\bf k}_n(P_\ast) + {\bf e}_n(P_\ast))\\
=& \hat \sigma_n^2 (P_\ast) +  e(P_\ast,P_\ast) - 2{\bf e}_n(P)^T(\hat {\bf K}_n + \sigma^2 {\bf I})^{-1} \hat {\bf k}_n(P_\ast) \\
&+ \hat{\bf k}_n(P)^T(\hat {\bf K}_n + \sigma^2 {\bf I})^{-1}{\bf E}_n(\hat {\bf K}_n + \sigma^2 {\bf I})^{-1} \hat {\bf k}_n(P_\ast)\\
&+ O(\Vert{\bf E}_n\Vert^2) + O(\Vert{\bf e}_n\Vert\cdot\Vert{\bf E}_n\Vert) + O(\Vert{\bf e}_n\Vert^2\cdot\Vert{\bf E}_n\Vert) 
\end{align*}

Notic that the following holds with a probability at least $1-n\varepsilon$, according to the Assumption \ref{assumption_estimation_bound},
\begin{equation*} \vert{\bf e}_n(P_\ast)^T (\hat K_n + \sigma^2{\bf I})^{-1} {\bf y}_n \vert \le \Vert {\bf e}_n(P_\ast) \Vert_2  \Vert(\hat K_n + \sigma^2{\bf I})^{-1}\Vert_2  \Vert {\bf y}_n \Vert_2  \le \frac{n}{\sigma^2}(M+A) e_\varepsilon,\end{equation*}
\begin{align*}\vert \hat {\bf k}_n(P_\ast) ^T (\hat {\bf K}_n + \sigma^2 {\bf I} )^{-1} {\bf E}_n (\hat{\bf K}_n + \sigma^2 {\bf I} )^{-1} {\bf y}_n \vert &\le  \Vert\hat {\bf k}_n(P_\ast)\Vert_2 \Vert(\hat K_n + \sigma^2{\bf I})^{-1}\Vert_2^2  \Vert{\bf E}_n\Vert_2\Vert{\bf y}_n \Vert_2\\
&\le \sqrt{n} \sigma^{-4} ne_\varepsilon \sqrt{n}(M+A) = \frac{n^2}{\sigma^4}(M+A), \end{align*}

here we use the fact that $\hat K_n$ semi-definite (which means $\Vert(\hat K_n + \sigma^2 I)^{-1}\Vert_2 \le \sigma^{-2}$), $\hat k(P_\ast,P_\ast) \le 1 $, $\vert y_i\vert \le M+A$.
Combining these results, we have that 
$$\vert\tilde \mu_n(P_\ast) - \hat \mu_n(P_\ast)\vert <(\frac{n}{\sigma^2} + \frac{n^2}{\sigma^4})(M+A)e_\varepsilon + O(e_\varepsilon^2),$$
holds with a probability at least $1-n\varepsilon$.

Similarly, we can conduct the same estimation to ${\bf e}_n(P)^T(\hat {\bf K}_n + \sigma^2 {\bf I})^{-1} \hat {\bf k}_n(P_\ast)$ and $ \hat{\bf k}_n(P)^T(\hat {\bf K}_n + \sigma^2 {\bf I})^{-1}{\bf E}_n(\hat {\bf K}_n + \sigma^2 {\bf I})^{-1} \hat {\bf k}_n(P_\ast)$, and get
$\vert\tilde \sigma_n^2(P_\ast) - \hat \sigma_n^2(P_\ast)\vert  < (1 + \frac{n}{\sigma^2})^2 e_\varepsilon+O(e_\varepsilon^2)$
holds with a probability at least $1-n\varepsilon$.

It remains to estimate the error for estimating the information gain. Notice that, with a probability at least $1-n\varepsilon$,
\begin{align*}
\left\vert\tilde I({\bf y}_n; \hat{\bf f}_n\vert \{p_t\}_{t=1}^n) - \hat I({\bf y}_n; \hat{\bf f}_n\vert \{p_t\}_{t=1}^n) \right\vert &=  \left\vert\frac{1}{2} \log \frac{\det( {\bf I} + \sigma^{-2} \tilde {\bf K}_n ) }{\det( {\bf I} + \sigma^{-2} \hat{\bf K}_n)}\right\vert \\
& = \left\vert\frac{1}{2} \log \det( {\bf I} - (\sigma^2{\bf I} +  \hat {\bf K}_n)^{-1} {\bf E}_n)\right\vert \\
& = \left\vert\frac{1}{2} \text{Tr} (\log ( {\bf I} - (\sigma^2{\bf I} +  \hat {\bf K}_n)^{-1} {\bf E}_n))  \right\vert\\
& = \left\vert\frac{1}{2} \text{Tr} (- (\sigma^2{\bf I} +  \hat {\bf K}_n)^{-1} {\bf E}_n) + O(\Vert {\bf E}_n \Vert^2 ) \right\vert\\
&\le \frac{n^{3/2}}{2\sigma^2}e_\varepsilon + O(\Vert{\bf E}_n\Vert^2) ,
\end{align*}
here the second equation uses the fact that $\det(AB^{-1}) = \det(A)\det(B)^{-1}$, and the third and fourth equations use $\log \det (I+A) = \text{Tr} \log (I+A) = \text{Tr}(A - \frac{A^2}{2}+ \cdots)$. The last inequality follows from the fact
$$ \text{Tr} (\sigma^2{\bf I} + \hat{\bf K}_n)^{-1} {\bf E}_n ) \le \Vert (\sigma^2{\bf I} + \hat{\bf K}_n)^{-1} \Vert_F \Vert {\bf E}_n \Vert_F \le n^{3/2}\sigma^{-2} e_\varepsilon $$
and $\hat {\bf K}_n$ is semi-definite.
\end{proof}

With the uncertainty bound given by  Lemma \ref{Statistics Error}, let us prove that under inexact kernel estimations, the posterior mean is concentrated around the unknown reward function $\hat f$

\begin{thm}\label{aquisition_bound}
 Under the former setting as in Theorem \ref{Statistics Error}, with probability at least $1-\delta-n\varepsilon$, let $\sigma_\nu = \sqrt{\sigma_\zeta^2 + \sigma_E^2}$, taking $\sigma = 1 + \frac{2}{n}$, the following holds for all $x\in \mathcal{X}$:
 \begin{align}\label{Mubias}
     \vert \tilde \mu_{n}(P_x) - \hat f(P_x) \vert \le&  \beta_n\tilde\sigma_{n}(P_x) +  \beta_n(1+n) e_\varepsilon^{1/2}+\left(n + n^2\right)(M+A)e_\varepsilon,\\
     \text{where } \beta_n =& \left (b + \sigma_{\nu}\sqrt{2(\hat \gamma_n- \ln(\delta)+1)} \right) \nonumber
 \end{align}
\end{thm}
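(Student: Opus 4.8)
The plan is to insert the exact-kernel posterior mean $\hat\mu_n(P_x)$ and split by the triangle inequality,
$$\vert \tilde\mu_n(P_x) - \hat f(P_x)\vert \le \vert \tilde\mu_n(P_x) - \hat\mu_n(P_x)\vert + \vert \hat\mu_n(P_x) - \hat f(P_x)\vert ,$$
then control each summand by a result already in hand. The first summand is exactly the mean-error estimate \eqref{Mean_Error} of Theorem \ref{Statistics Error}: since we set $\sigma^2 = 1 + 2/n \ge 1$ we have $n/\sigma^2 \le n$ and $n^2/\sigma^4 \le n^2$, so $\vert \tilde\mu_n(P_x) - \hat\mu_n(P_x)\vert < (n + n^2)(M+A)e_\varepsilon + O(e_\varepsilon^2)$, which already supplies the last term of the claimed bound. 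The second summand is the bias of the \emph{exact}-kernel posterior, for which I invoke inequality \eqref{exact_bias} of Lemma \ref{exactkernel_regretbound}: with probability at least $1-\delta$, simultaneously for all $x$, $\vert \hat\mu_n(P_x) - \hat f(P_x)\vert \le \bigl(b + \sigma_\nu\sqrt{2(\hat I({\bf y}_n;\hat{\bf f}_n\vert\{P_t\}_{t=1}^n) + 1 + \ln(1/\delta))}\bigr)\,\hat\sigma_n(P_x)$.

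Two reductions then bring this into the stated form. First, bound the information gain by its supremum, $\hat I({\bf y}_n;\hat{\bf f}_n\vert\{P_t\}_{t=1}^n) \le \hat\gamma_n$, so the prefactor becomes exactly $\beta_n = b + \sigma_\nu\sqrt{2(\hat\gamma_n - \ln\delta + 1)}$ (using $\ln(1/\delta) = -\ln\delta$). Second, convert the variance-error bound \eqref{Variance_Error} into a standard-deviation bound: again using $\sigma^2 \ge 1$, we get $\hat\sigma_n^2(P_x) \le \tilde\sigma_n^2(P_x) + (1+n)^2 e_\varepsilon + O(e_\varepsilon^2)$, and by subadditivity of the square root ($\sqrt{a+c}\le\sqrt a + \sqrt c$ for $a,c\ge 0$, with $a = \tilde\sigma_n^2(P_x)$ a genuine variance) this yields $\hat\sigma_n(P_x) \le \tilde\sigma_n(P_x) + (1+n)e_\varepsilon^{1/2} + O(e_\varepsilon)$. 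Multiplying by $\beta_n \ge 0$ gives $\beta_n\hat\sigma_n(P_x) \le \beta_n\tilde\sigma_n(P_x) + \beta_n(1+n)e_\varepsilon^{1/2} + \beta_n\,O(e_\varepsilon)$. Adding the two summands and collecting the leading contributions produces the asserted inequality.

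For the probability bookkeeping: inequality \eqref{exact_bias} holds on an event of probability at least $1-\delta$, while the mean- and variance-error bounds of Theorem \ref{Statistics Error} each hold on events of probability at least $1-n\varepsilon$; a union bound places all of them on a common event of probability at least $1-\delta-n\varepsilon$, matching the statement.

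The only genuinely delicate point is the passage from the variance error to the standard-deviation error — one must take care that the square-root subadditivity step keeps the correct power $e_\varepsilon^{1/2}$, and that the resulting lower-order remainders (the $O(e_\varepsilon^2)$ from \eqref{Mean_Error}, and the $\beta_n\,O(e_\varepsilon)$ generated when square-rooting the $O(e_\varepsilon^2)$ tail inside \eqref{Variance_Error}) are dominated by the displayed terms $\beta_n(1+n)e_\varepsilon^{1/2}$ and $(n+n^2)(M+A)e_\varepsilon$ in the small-$e_\varepsilon$ regime, so they can be legitimately absorbed. Everything else is a mechanical combination of Lemma \ref{exactkernel_regretbound}, Theorem \ref{Statistics Error}, and the definition of $\hat\gamma_n$.
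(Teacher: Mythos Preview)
Your proposal is correct and follows essentially the same route as the paper's proof: triangle-split $|\tilde\mu_n - \hat f| \le |\tilde\mu_n - \hat\mu_n| + |\hat\mu_n - \hat f|$, apply \eqref{Mean_Error} to the first piece and \eqref{exact_bias} to the second, then replace $\hat I$ by $\hat\gamma_n$ and pass from $\hat\sigma_n$ to $\tilde\sigma_n$ via \eqref{Variance_Error} and $\sqrt{a+c}\le\sqrt a+\sqrt c$. If anything you are slightly more careful than the paper about the $O(e_\varepsilon^2)$ remainders and about noting that the Theorem~\ref{Statistics Error} bounds live on a single $1-n\varepsilon$ event (so the union bound is $1-\delta-n\varepsilon$, not $1-\delta-2n\varepsilon$).
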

\begin{proof}
According to Lemma \ref{exactkernel_regretbound}, equation \eqref{exact_bias}, we have

\begin{equation*}
    \vert \hat \mu_n(P_x) - \hat f(P_x) \vert \le \hat \beta_n \hat \sigma_n(P_x)
\end{equation*}
with
\begin{equation*}\hat\beta_n = b +\sigma_\nu \sqrt{2\left(\hat I({\bf y}_n; \hat{\bf f}_n\vert \{P_t\}_{t=1}^n) + 1 + \ln(1/\delta)\right)} \le \beta_n. \end{equation*}

Notice that
\begin{equation}\vert \tilde \mu_n(P_x) - \hat f(P_x) \vert \le \vert  \tilde \mu_n(P_x) - \hat \mu_n(P_x) \vert + \vert \hat \mu_n(P_x) - \hat f(P_x) \vert,\label{tildemu_bias}\end{equation}

We also have \eqref{Variance_Error}, which means
\begin{equation} \label{Std_Error}
    \hat \sigma_n(P_x) = \sqrt{\hat\sigma_n(P_x)^2} \le \sqrt{\tilde \sigma_n(P_x)^2 + (1+ n)^2e_\varepsilon} \le \tilde \sigma_n(P_x) + (1+n) e_\varepsilon^{1/2},
\end{equation}

combining \eqref{Mean_Error}, \eqref{tildemu_bias} and \eqref{Std_Error}, we finally get the result in \eqref{Mubias}.

\end{proof}

\subsection{Proofs for Theorem \ref{uncertainty_regret_bound}}\label{uncertainty_regret_bound_proof}
Now we can prove our main theorem \ref{uncertainty_regret_bound}.
\begin{proof}[Proof of Theorem \ref{uncertainty_regret_bound}.]
Let $x^\ast$ maximize $\hat f(P_x)$ over $\mathcal{X}$. Observing that at each round $n \ge 1$, by the choice of $x_n$ to maximize the acquisition function $\tilde \alpha(x|\mathcal{D}_{n-1}) = \tilde\mu_{n-1}(P_x)  + \beta_{n-1}  \tilde\sigma_{n-1}(P_x) $, we have

\begin{align*} \tilde r_n &= \hat f(P_{x^\ast}) - \hat f(P_{x_n}) \\
&\le \tilde\mu_{n-1}(P_{x^\ast})  + \beta_{n-1}  \tilde\sigma_{n-1}(P_{x^\ast}) -\tilde\mu_{n-1}(P_{x_n})  + \beta_{n-1}  \tilde\sigma_{n-1}(P_{x_n})+  2Err(n-1, e_\varepsilon) \\
&\le 2\beta_{n-1}\tilde\sigma_{n-1}(P_{x_n})+  2Err(n-1, e_\varepsilon).
\end{align*}

Here we denote $Err(n,e_\varepsilon):= \left(\beta_n(1+n) + \tilde\sigma_n(P_x)\sigma_\nu n^{3/4}\right)e_\varepsilon^{1/2}+\left(n + n^2\right)(M+A)e_\varepsilon.$ The second inequality follows from \eqref{Mubias},
\begin{align*}
\hat f(P_{x^\ast}) - \tilde\mu_{n-1}(P_{x^\ast}) \le  \beta_{n-1}  \tilde\sigma_{n-1}(P_{x^\ast}) + Err(n-1,e_\varepsilon)\\
 \tilde\mu_{n-1}(P_{x_n})-\hat f(P_{x_n})  \le  \beta_{n-1}  \tilde\sigma_{n-1}(P_{x_n}) + Err(n-1,e_\varepsilon),
\end{align*}
and the third inequality follows from the choice of $x_n$:
$$\tilde\mu_{n-1}(P_{x^\ast})  + \beta_{n-1}  \tilde\sigma_{n-1}(P_{x^\ast}) \le \tilde\mu_{n-1}(P_{x_n})  + \beta_{n-1}  \tilde\sigma_{n-1}(P_{x_n}).$$
Thus we have
\begin{equation*}
    \tilde R_n = \sum_{t=1} ^n \tilde r_t \le 2\beta_n \sum_{t=1}^n \tilde \sigma_{t-1} (P_{x_t}) + \sum_{t=1}^T Err(t-1,e_\varepsilon).
\end{equation*}
From Lemma 4 in \cite{Chowdhury2017OnKM}, we have that 
$$\sum_{t=1}^n \tilde\sigma_{t-1}(P_{x_t}) \le \sqrt{4(n+2)\ln \det(I + \sigma^{-2} \tilde K_n)} \le \sqrt{4(n+2) (\hat\gamma_n + \frac{n^{\frac{3}{2}}}{2}e_\varepsilon) },$$
and thus
\begin{equation}\label{Errsigma}2\beta_n\sum_{t=1}^n \tilde\sigma_{t-1}(P_{x_t}) = O\left(\sqrt{n\hat\gamma_n} + \sqrt{n\hat\gamma_n(\hat\gamma_n - \ln \delta)} + \sqrt{n^{\frac{5}{2}}e_\varepsilon}+\sqrt{n^{\frac{5}{2}}(\hat\gamma_n - \ln \delta)e_\varepsilon  }\right). \end{equation}
On the other hand, notice that
\begin{equation}\label{Errmu}\sum_{t=1}^n Err(t-1,e_\varepsilon) = O\left(  n^2 \sqrt{(\hat\gamma_n - \ln\delta) e_\varepsilon} + (n^2+n^3) e_\epsilon   \right), \end{equation}
we find that the $e_\varepsilon$ term in  \eqref{Errsigma} can be controlled by in \eqref{Errmu}, thus we immediately get the result.
\end{proof}

\subsection{Proofs for Theorem \ref{gamma_bound}} \label{gamma_bound_proof}
\begin{proof}
Define the square of the MMD distance between $P_{x_1}, P_{x_2}$ as $d_{M}(x_1,x_2),$ we have

\begin{align*}
    & d_M(x_1,x_2) \\
    &= \int_{\mathbb{R}^d} k(x,x') P_{x_1}(x)P_{x_1}(x')\dd x\dd x' + \int_{\mathbb{R}^d} k(x,x') P_{x_2}(x)P_{x_2}(x')\dd x\dd x' \\
    & - 2\int_{\mathbb{R}^d} k(x,x') P_{x_1}(x)P_{x_2}(x')\dd x\dd x' \\
    & = \int_{\mathbb{R}^d} (k(x - x_1, x' - x_1) + k(x - x_2, x' - x_2) - 2k(x - x_1, x' - x_2))P_0(x)P_0(x') \dd x \dd x' .
\end{align*}

It is not hard to verify that $d_M$ is shift invariant: $d_M(x_1,x_2) = d_M(x_1 - x_2, 0)$, and $d_M$ has $r$-th bounded derivatives, thus $\hat k^\ast(x_1, x_2) := \hat k(P_{x_1}, P_{x_2}) = \exp(- \alpha d_M(x_1, x_2))$ is shift invariant with $r$-th bounded derivatives. Then take $\mu(x)$ as the Lebesgue measure over $\mathcal{X}$, according to Theorem 4, \cite{Khn1987EigenvaluesOI}, the integral operator $T_{k,\mu}:T_{k,\mu}f(x)= \int_{\mathcal{X}} K(x,y) f(y) d\mu(y)$ is a symmetric compact operator in $L_2(\mathcal{X},\mu)$, and the spectrum of $T_{k,\mu}$ satisfies

$$\lambda_n(T_{k,\mu}) = O(n^{-1 - r/d}).$$

Then according to Theorem 5 in \cite{Srinivas2009GaussianPO}, we have $\hat \gamma_n = O(n^{\frac{d(d+1)}{r + d(d+1)}} \log (n))$, which finish the proof.
\end{proof}

\section{Evaluation Details}
\subsection{Implementation}\label{app_sec_implementation_details}
In our implementation of \algo{}, we design the kernel $k$ used for MMD estimation to be a linear combination of multiple Rational Quadratic kernels as its long tail behavior circumvents the fast decay issue of kernel~\cite{binkowskiDemystifyingMMDGANs2021}: 
\begin{equation}\label{eq_RQ_kernel}
    k(x, x^\prime) = \sum_{a_i \in \{ 0.2, 0.5, 1, 2, 5 \} } 
    \big( 1 + \frac{(x-x^\prime)^2}{2 a_i l_i^2}  \big)^{-a_i},
\end{equation}
\noindent where $l_i$ is a learnable lengthscale and $a_i$ determines the relative weighting of large-scale and small-scale variations. 

Depending on the form of input distributions, the sampling and sub-sampling sizes for \nystrom{} MMD estimator are empirically selected via experiments. Moreover, as the input uncertainty is already modeled in the surrogate, we employ a classic UCB-based acquisition as Eq.~\ref{eq_ucb} with $\beta=2.0$ and maximize it via an L-BFGS-B optimizer.

\begin{figure}
     \centering
     \begin{subfigure}[t]{0.3\textwidth}
         \centering
         \includegraphics[width=\textwidth]{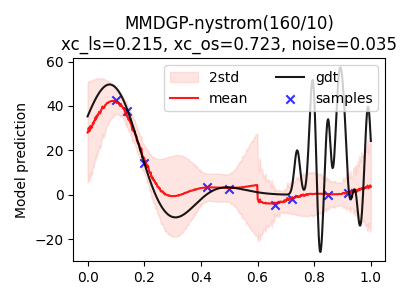}
         \caption{MMD-GP with a Nystrom estimator, in which the sampling size $m=160$ and sub-sampling size $h=10$.}
         \label{fig_modeling_chi2_nystrom}
     \end{subfigure}
     \hfill
     \begin{subfigure}[t]{0.3\textwidth}
         \centering
         \includegraphics[width=\textwidth]{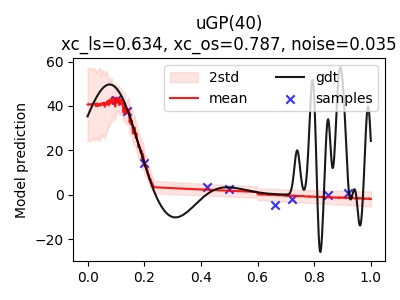}
         \caption{uGP model that uses an integral kernel~\cite{oliveira2019bayesian} and a sampling size of $m=40$.}
         \label{fig_modeling_chi2_uGP_40}
     \end{subfigure}
     \hfill
     \begin{subfigure}[t]{0.3\textwidth}
         \centering
         \includegraphics[width=\textwidth]{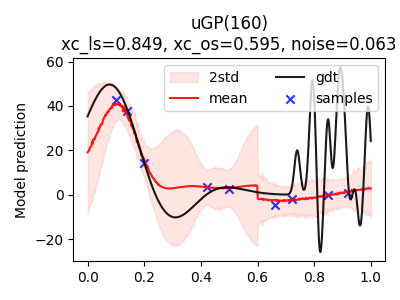}
         \caption{uGP with an integral kernel~\cite{oliveira2019bayesian} and uses a much larger sampling size ($m = 160$).}
         \label{fig_modeling_chi2_uGP_160}
     \end{subfigure} \\
     \begin{subfigure}[t]{0.3\textwidth}
         \centering
         \includegraphics[width=\textwidth]{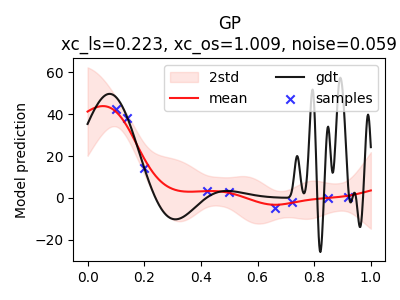}
         \caption{Conventional noisy GP model.}
         \label{fig_modeling_chi2_gp}
     \end{subfigure}
     \hfill
     \begin{subfigure}[t]{0.3\textwidth}
         \centering
         \includegraphics[width=\textwidth]{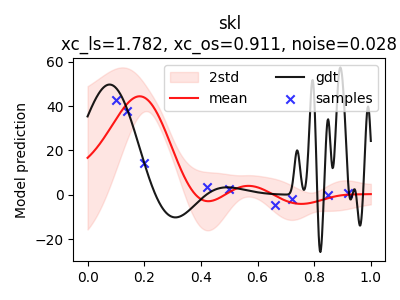}
         \caption{GP model with a symmetric KL-divergence kernel~\cite{moreno2003kullbackleiblera}.}
         \label{fig_modeling_chi2_skl}
     \end{subfigure}
     \hfill\begin{subfigure}[t]{0.3\textwidth}
         \centering
         \includegraphics[width=\textwidth]{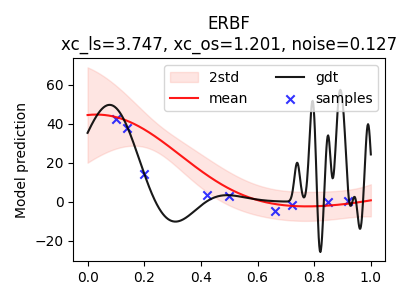}
         \caption{Robust GP model with an expected RBF kernel~\cite{dallaireApproximateInferenceGaussian2011a}}
         \label{fig_modeling_chi2_ERBF}
     \end{subfigure}
     \hfill
\caption{Modeling performance with a step-changing Chi-squared distribution.}
\label{fig_modeling_performance_chi2}
\end{figure}

\section{More Experiments}
\subsection{Comparing with the Other Models}\label{app_sec_modeling_comparision}
To compare the modeling performances with the other models, we design the input uncertainty to follow a step-changing Chi-squared distribution: $P_x = \chi^2(g(x), \sigma=0.01) $, where $g(x) = 0.5$ if  $x\in[0.0, 0.6)$ and $g(x) = 7.0$ when $x\in[0.6, 1.0]$. Due to this sudden parameter change, the uncertainty at point $x=0.6$ is expected to be asymmetric: 1) on its left-hand side, as the Chi-squared distribution becomes quite lean and sharp with a small value of $g(x)=0.5$, the distance from $x=0.6$ to its LHS points, $x_{lhs} \in [0.0, 0.6)$, are relatively large, thus their covariances are small, resulting a fast-growing uncertainty. 2)Meanwhile, when $x \in [0.6, 1.0]$, the $g(x)$ suddenly increases to $7.0$, rendering the input distribution a quite flat one with a long tail. Therefore, the distances between $x=0.6$ and its RHS points become relatively small, which leads to large covariances and small uncertainties for points in $[0.6, 1.0]$. As a result, we expect to observe an asymmetric posterior uncertainty at $x=0.6$.

Several surrogate models are employed in this comparison, including:
\begin{itemize}
    \item \textbf{MMDGP-nystrom(160/10)} is our method with a sampling size $m=160$ and sub-sampling size $h=10$. Its complexity is $O(MNmh)$, where $M$ and $N$ are the sizes of training and test samples (Note: all the models in this experiment use the same training and testing samples for a fair comparison).
    \item \textbf{uGP(40)} is the surrogate from~\cite{oliveiraBayesianOptimisationUncertain2019a}, which employs an integral kernel with sampling size $m=40$. Due to its $O(MNm^2)$ complexity, we set the sampling size $m=40$ to ensure a similar complexity as ours.
   \item \textbf{uGP(160)} is also the surrogate from~\cite{oliveiraBayesianOptimisationUncertain2019a} but uses a much larger sampling size ($m = 160$). Given the same training and testing samples, its complexity is 16 times higher than \textbf{MMDGP-nystrom(160/10)}.
   \item \textbf{skl} is a robust GP surrogate equipped with a symmetric KL-based kernel, which is described in~\cite{moreno2003kullbackleiblera}.
   \item \textbf{ERBF}~\cite{dallaireApproximateInferenceGaussian2011a} assumes the input uncertainty to be Gaussians and employs a close-form expected RBF kernel. 
   \item \textbf{GP} utilizes a noisy Gaussian Process model with a learnable output noise level.
\end{itemize}

According to Figure~\ref{fig_modeling_chi2_nystrom}, our method, MMDGP-nystrom(160/10), can comprehensively quantify the sudden change of the input uncertainty, evidenced by its abrupt posterior change at $x=0.6$. However, Figure~\ref{fig_modeling_chi2_uGP_40} shows that uGP(40) with the same complexity fails to model the uncertainty correctly. We suspect this is because uGP requires much larger samples to stabilize its estimation of the integral kernel and thus can perform poorly with insufficient sample size, so we further evaluate the uGP(160) with a much larger sampling size $m=160$ in Figure~\ref{fig_modeling_chi2_uGP_160}. It does successfully alleviate the issue but also results in a 16 times higher complexity. Apart from this, Figure~\ref{fig_modeling_chi2_gp} suggests the noisy GP model with a learnable output noise level is not aware of this uncertainty change at all as it treats the inputs as the exact values instead of random variables. Moreover, Figure~\ref{fig_modeling_chi2_skl} and \ref{fig_modeling_chi2_ERBF} show that both the skl and ERBF fail in this case, this may be due to their misassumption of Gaussian input uncertainty.

\subsection{Ablation Test for \nystrom{} Approximation}\label{app_sec_ablation_for_nystrom}
In this experiment, we aim to examine the effect of \nystrom{} approximation for optimization. To this end, we choose to optimize an RKHS function (Figure~\ref{fig_benchmark_RKHS}) under a beta input distribution: $P_x = beta(\alpha=0.4, \beta=0.2, \sigma=0.1) $. Several amortized candidates include: 
\begin{itemize}
    \item \textit{MMDGP-nystrom} is our method with Nystrom approximation, in which the sampling size $m=16$ and sub-sampling size $h=9$. Its complexity is $O(MNmh)$, where $M$ and $N$ are the sizes of training and test samples respectively, $m$ is the sampling size for MMD estimation, and $h$ indicates the sub-sampling size during the Nystrom approximation.
    \item \textit{MMDGP-raw-S} does not use the Nystrom approximation but employs an empirical MMD estimator. Due to its $O(MNm^2)$ complexity, we set the sampling size $m=12$ to ensure a similar complexity as the \textit{MMDGP-nystrom}.
    \item \textit{MMDGP-raw-L} also uses an empirical MMD estimator, but with a larger sampling size ($m = 16$).
    \item \textit{GP} utilizes a vanilla GP with a learnable output noise level and optimizes with the upper-confidence-bound acquisition\footnote{For a fair comparison, all the methods in this test use a UCB acquisition.}. 
\end{itemize}

\begin{figure}
\centering
\begin{minipage}[t]{.66\textwidth}
  \vspace{0pt}
  \centering
  \includegraphics[width=1.0\textwidth]{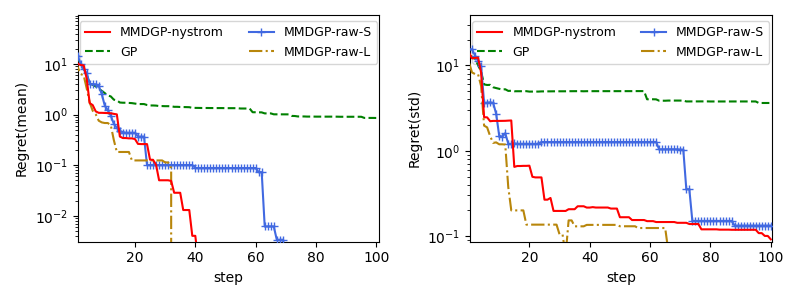}
  \captionof{figure}{Ablation test for the \nystrom{} approximation.}
  \label{fig_ablation_test_for_nystrom}
  \end{minipage}
\end{figure}

According to Figure~\ref{fig_ablation_test_for_nystrom}, we find that 1) with sufficient computation power, the \textit{MMDGP-raw-L} can obtain the best performance by using a large sample size. 2)However, with limited complexity, the performance \textit{MMDGP-raw-S} degrades obviously while the \textit{MMDGP-nystrom} performs much better. This suggests that the \nystrom{} approximation can significantly improve the efficiency with a mild cost of performance degradation. 3) All the MMDGP-based methods are better than the vanilla \textit{GP-UCB}.

\subsection{Optimization on 10D Bumped-Bowl Problem}\label{app_sec_10d_bumped_bowl}
To further evaluate \algo's optimization performance on the high-dimensional problem, we employ a 10-dimensional bumped bowl function from~\cite{sandersBayesianSearchRobust2021a, mirjalili2016obstacles}:
\begin{equation}
    f(\boldsymbol{x}) = g(\boldsymbol{x}_{1:2}) h(\boldsymbol{x}_{3:}), \text{where } 
    \begin{cases}
      & g(\boldsymbol{x}) = 2\log{(0.8\Vert \boldsymbol{x} \Vert^2 + e^{-10\Vert \boldsymbol{x} \Vert^2} )} + 2.54 \\
      & h(\boldsymbol{x}) = \sum_i^d 5x_i^2 + 1
    \end{cases}   
\end{equation}
\noindent Here, $x_i$ is the i-th dimension of $\boldsymbol{x}$, $\boldsymbol{x}_{1:2}$ represents the first 2 dimensions for the variable, and $\boldsymbol{x}_{3:}$ indicates the rest dimensions. The input uncertainty is designed to follow a concatenated distribution of a 2D circular distribution($r=0.5$) and a multivariate normal distribution with a zero mean and diagonal covariance of 0.01.

Figure~\ref{fig_opt_regret_10D_bumped_bowl} shows the mean and std values of the optimization regrets. We note that 1)when it comes to a high-dimensional problem and complex input distribution, the misassumption of Gaussian input uncertainty renders the \textit{skl} and \textit{ERBF} fail to locate the robust optimum and get stuck at local optimums. 2)Our method outperforms the others and can find the robust optimum efficiently and stably, while the \textit{uGP} with a similar inference cost suffers the instability caused by insufficient sampling and stumbles over iterations, which can be evidenced by its high std values of optimization regret. 

\begin{figure}
\centering
\begin{minipage}[t]{.66\textwidth}
  \vspace{0pt}
  \centering
  \includegraphics[width=1.0\textwidth]{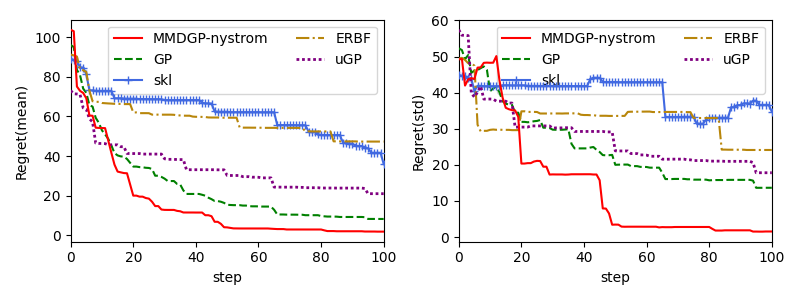}
  \captionof{figure}{Optimization regret on 10D bumped-bowl problem.}
  \label{fig_opt_regret_10D_bumped_bowl}
\end{minipage}%
\hspace{0.01\textwidth}
\begin{minipage}[t]{.3\textwidth}
  \vspace{0pt}
  \centering
  \includegraphics[width=1.0\textwidth]{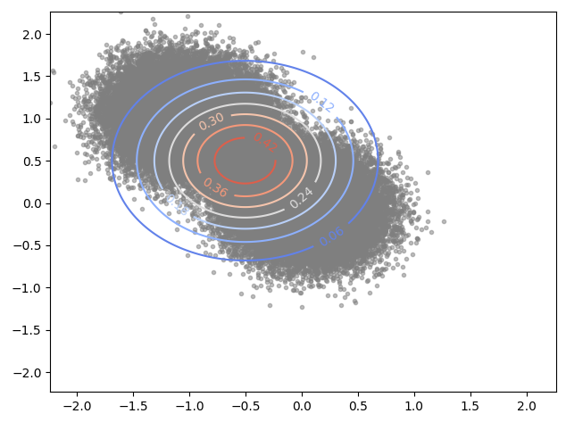}
  \captionof{figure}{The input GMM distribution.}
  \label{fig_input_gmm}
\end{minipage}
\end{figure}

\begin{figure}
    \centering
    \includegraphics[width=0.7\textwidth]{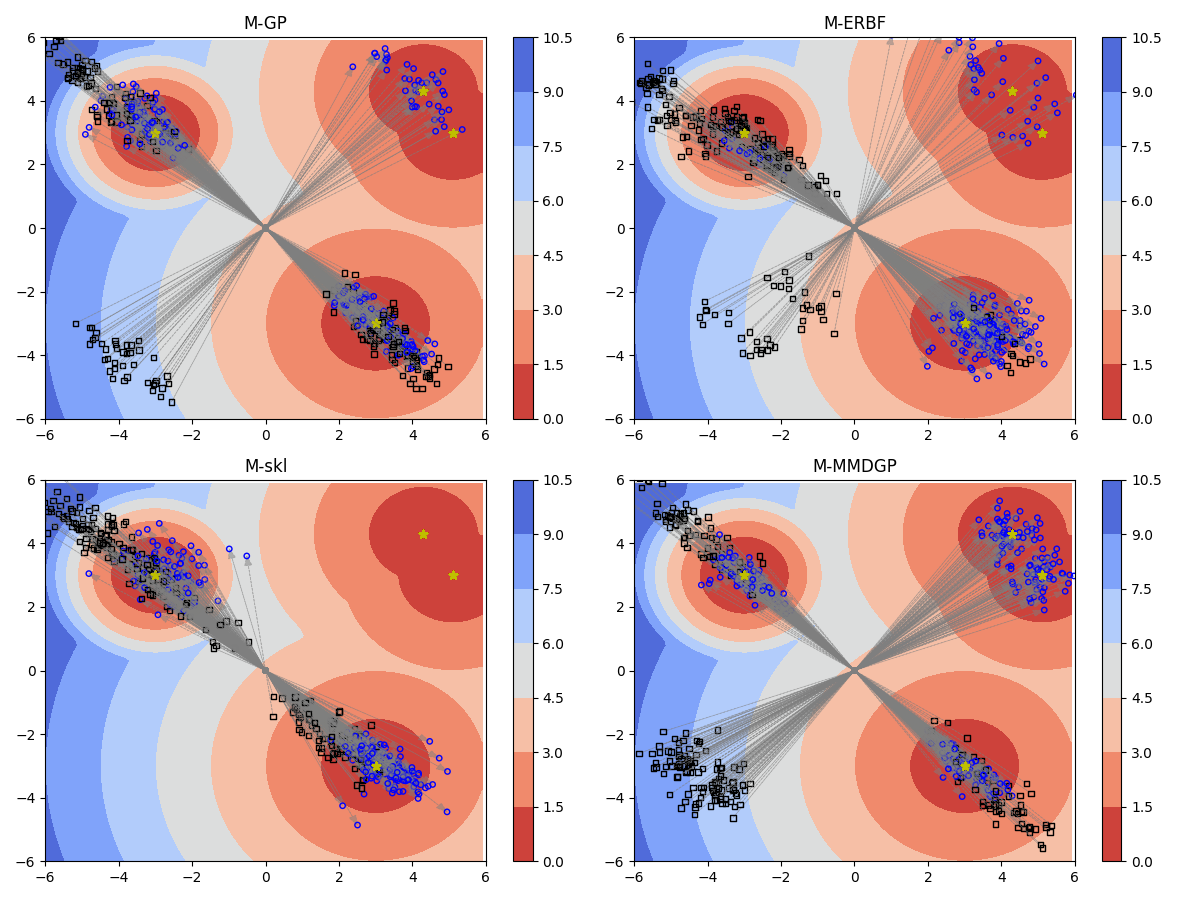}
    \caption{Simulation results of the push configurations found by different algorithms.}
    \label{fig_push_visualization}
\end{figure}

\subsection{Robust Robot Pushing}\label{app_sec_robot_pushing}
This benchmark is based on a Box2D simulator from~\cite{wang2017max}, where our objective is to identify a robust push configuration, enabling a robot to push a ball to predetermined targets under input randomness. In our experiment, we simplify the task by setting the push angle to $r_a = \arctan{\frac{r_y}{r_x}}$, ensuring the robot is always facing the ball. Also, we intentionally define the input distribution as a two-component Gaussian Mixture Model as follows:
\begin{equation}
 (r_x, r_y, r_t) \sim GMM \big( \mathbf{\mu} = \begin{bmatrix} 0 & 0 & 0 \\ -1 & 1 & 0 \end{bmatrix}, 
\mathbf{\Sigma} = \begin{bmatrix} 0.1^2 & -0.3^2 & 1e-6 \\ -0.3^2 & 0.1^2 & 1e-6 \\ 1e-6 & 1e-6 & 1.0^2 \end{bmatrix}, w=\begin{bmatrix} 0.5 \\ 0.5 \end{bmatrix}
\big),
\end{equation}\label{eq_gmm}
\noindent where the covariance matrix $\Sigma$ is shared among components and $w$ is the weights of mixture components. Meanwhile, as the SKL-UCB and ERBF-UCB surrogates can only accept Gaussian input distributions, we choose to approximate the true input distribution with a Gaussian. As shown in Figure~\ref{fig_input_gmm}, the approximation error is obvious, which explains the performance gap among these algorithms in Figure~\ref{fig_regret_tgp3}.

Apart from the statistics of the found pre-images in Figure~\ref{fig_push_results_tgp3}, we also simulate the robot pushes according to the found configurations and visualize the results in Figure~\ref{fig_push_visualization}. In this figure, each black hollow square represents an instance of the robot's initial location, the grey arrow indicates the push direction and duration, and the blue circle marks the ball's ending position after the push. We can find that, as the GP-UCB ignores the input uncertainty, it randomly pushes to these targets and the ball ending positions fluctuate. Also, due to the incorrect assumption of the input distribution, the SKL-UCB and ERBF-UCB fail to control the ball's ending position under input randomness. On the contrary, \algo{} successfully recognizes the twin targets in quadrant I as an optimal choice and frequently pushes to this area. Moreover, all the ball's ending positions are well controlled and centralized around the targets under input randomness.

\end{document}